\def\BibTeX{{\rm B\kern-.05em{\sc i\kern-.025em b}\kern-.08em
    T\kern-.1667em\lower.7ex\hbox{E}\kern-.125emX}}
\newtheorem{theorem}            {Theorem}[section] 
\newtheorem{definition}         [theorem]{Definition} 
\newtheorem{assumption}         {Assumption}[section] 
\newtheorem{proposition}		[theorem]{Proposition} 
\newtheorem{remark}	      [theorem]{Remark}
\newcommand{\dd}   {{\rm d}\hbox{\hskip 0.5pt}}
\newcommand{\rline}{{\mathbb R}}
\newcommand{\bbm}[1]{\left[\begin{matrix} #1 \end{matrix}\right]}
\newcommand{\sbm}[1]{\left[\begin{smallmatrix} #1
	\end{smallmatrix}\right]}
\newcommand{\degree}{^{\circ}}
\begin{document}
\title{Collision-free Source Seeking Control Methods for Unicycle Robots} 

\author{Tinghua Li, Bayu Jayawardhana, \IEEEmembership{Senior Member, IEEE}
\thanks{Tinghua Li and Bayu Jayawardhana are with the Engineering and Technology Institute Groningen, Faculty of Science and Engineering, University of Groningen, 9747 AG Groningen, The Netherlands (e-mail: lilytinghua@gmail.com;
b.jayawardhana@rug.nl).
}
}

\maketitle

\begin{abstract}
In this work, we propose a collision-free source-seeking control framework for a unicycle robot traversing an unknown cluttered environment. In this framework, obstacle avoidance is guided by the control barrier functions (CBF) embedded in quadratic programming, and the source-seeking control relies solely on the use of onboard sensors that measure the signal strength of the source. To tackle the mixed relative degree and avoid the undesired position offset for the nonholonomic unicycle model, we propose a novel construction of a control barrier function (CBF) that can directly be integrated with our recent gradient-ascent source-seeking control law. We present a rigorous analysis of the approach. The efficacy of the proposed approach is evaluated via Monte-Carlo simulations, as well as, using a realistic dynamic environment with moving obstacles in Gazebo/ROS. 
\end{abstract}

\begin{IEEEkeywords}
Motion control, autonomous navigation, obstacle avoidance.
\end{IEEEkeywords}

\IEEEpeerreviewmaketitle

\vspace{-0.1cm}
\section{Introduction}
In the development of autonomous systems, such as autonomous vehicles, autonomous robots and autonomous spacecraft, safety-critical control systems are essential for ensuring the attainment of the control goals while guaranteeing the safe operations of the systems 
\cite{JC2002, Hovakimyan2001}. 
For example, autonomous robots may need to fulfill the source-seeking task while navigating in an unknown environment safely. This capability is important for search and rescue missions and for chemical/nuclear disaster management systems. Unlike standard robotic control systems that are equipped with path-planning algorithms, the design of safe source-seeking control systems is challenging for several factors. Firstly, the source/target location is not known apriori and hence path planning cannot be done beforehand. Secondly, the lack of global information on the obstacles in an unknown environment (such as underwater, indoor or hazardous disaster area) prevents the deployment of safe navigation trajectory generation \cite{Zou2015, Azuma2012}. Thirdly, the control systems must be able to solve these two sub-tasks consistently without generating conflicts of control action.

As another important control problem in high-tech systems and robotics, motion control with guaranteed safety has become essential for safety-critical systems. In this regard, the obstacle avoidance problem in robotics is typically rewritten as a state-constrained control problem, where collision with an obstacle is regarded as a violation of state constraints. Using this framework, multiple strategies have been proposed in the literature that can cope with static and dynamic environments. One popular obstacle avoidance algorithm is inspired by the use of the control barrier function (CBF) for nonlinear systems \cite{Prajna,Wieland, Xiao}. The integration of stabilization and safety control can be done either by combining the use of control Lyapunov function (CLF) with CBF as pursued in \cite{Romdlony2016}, or by recasting the two control objectives in the constraint of a given dynamic programming, such as the ones using quadratic programming (QP) in \cite{Ames_CDC,Ames_TAC}.
A relaxation on the sub-level set condition in the CBF has been studied in \cite{Kong,Dai}. For the approach that recasts the problem into QP, the early works in \cite{Ames_ECC,Glotfelterp} are applicable only to systems with relative-degree of one, where the relative degree is with respect to the barrier function as the output. 

Another common approach of collision-free source-seeking is extremum-seeking-control (ESC) \cite{ref-1,ref-2,ref-3}, where the potentials of the source and the obstacles are integrated into a navigation function such that the steady-state gradient of potential can be estimated. However, given the application of excitation signals for inferring the cost function, this approach can result in slow convergence and is affected by the initialization. Moreover, the repulsive potential of the obstacles is integrated into the control input at all time. 
As far as the authors are aware, the integration of CBF with the aforementioned source-seeking control problem has not yet been reported in the literature. In contrast to the path-planning problem where the target location is known in advance, the combination of the CBF-based method with source-seeking control is non-trivial due to the lack of information on the source location. Moreover, solving this problem for non-holonomic systems (such as unicycles) adds to the challenge, where the nominal distance-based CBF can cause a relative degree problem for the control inputs (i.e., \emph{relative degree of one} with respect to longitudinal velocity input and \emph{relative degree of two} with respect to the angular velocity input). Using existing methods proposed in the literature, it is not trivial to include both inputs in the CBF constraint formulated in QP. A standard solution is to linearize the unicycle by considering a virtual point instead of the robot's center \cite{Glotfelter,Majd,Yaghoubi}, however, it would result in an undesired offset in avoidance motion. Another solution is to achieve both tasks by solely controlling the angular velocity and maintaining a constant small linear velocity, which limits the performance of the robot in motion. 

To overcome the mixed relative degree problem and provide flexibility for unicycle robot motion, we propose the collision-free source-seeking control system for the autonomous robot that is described by the unicycle system. The considered source field can represent the concentration of chemical substance/radiation in the case of chemical tracing, the heat flow in the case of hazardous fire, the flow of air or water in the case of locating a potential source, etc. The field strength is assumed to decay as the distance to the source increases. We extend our previous source-seeking control work in \cite{Li}, where a projected gradient-ascent control law is used to solve the source-seeking problem, by combining it with the control barrier function (CBF) method to avoid obstacles on its path toward the source. In summary, our main contributions are:
\begin{enumerate}
    \item A novel construction of zeroing control barrier function (ZCBF) is proposed for a unicycle robot using an extended state space to get a uniform relative degree with respect to both longitudinal and angular velocity control inputs. 
    \item Only the local sensing measurements are required to achieve both source-seeking (requiring the gradient of the source field) and obstacle avoidance task (requiring the relative bearing and distance to the closest obstacle).
    \item  The asymptotic convergence and the safety properties of the closed-loop systems are analyzed. The proposed methods can be implemented numerically using any existing QP solvers, and their efficacy is verified in Monte Carlo simulation using Matlab and in a realistic environment using Gazebo/ROS platforms, where a smooth trajectory and fast convergence are demonstrated. 
\end{enumerate}
 The rest of the paper is organized as follows. In Section \ref{sec:problem}, we formulate the safety-guaranteed source-seeking control problem. The control integration methods and the constructions of the control barrier functions are illustrated in Section \ref{sec:control_design}.  Section \ref{sec:simulation} presents the efficacy of the proposed methods via Monte Carlo simulations, as well as, using realistic environment in Gazebo/ROS. The conclusions and future work are provided in Section \ref{sec:conclusion}. 

\section{Problem Formulation and Preliminaries} \label{sec:problem}
{\bf Notations.} For a vector field $f:\rline^n\to\rline^n$ and a scalar function $h:\rline^n\to\rline$, the Lie derivative of $h$ along the vector field $f$ is denoted by $L_fh(x):\rline^n\to\rline$ and defined by $L_fh(x)=\frac{\partial h(x)}{\partial x}f(x)$. A continuous function $\alpha : (-b,a) \rightarrow (-\infty, \infty)$ is said to belong to \emph{extended class} $\mathcal{K}_e$ for some $a,b>0$ if  it is strictly increasing and $\alpha(0)=0$.

\vspace{-0.2cm}
\subsection{Problem Formulation } \label{sec:Problem}
For describing our safety-guaranteed source-seeking problem in a 2-D plane, let us consider a scenario where the mobile robot has to locate an unknown signal source while safely traversing across a cluttered environment. The signal strength decays with the increasing distance to the source, and the robot is tasked to search and approach the source as quickly as possible while performing active maneuvers to avoid collision with the obstacles based on the available local measurements.
\begin{assumption}\label{ass:J}
   \rm{The source distribution $J(x,y)$ of the obstacle-occupied environment is a twice differentiable, radially unbounded strictly concave function in the $(x,y)$-plane with a global maximum at the source location $(x^*,y^*)$, e.g. $J(x^*, y^*)>J(x,y)$ for all $(x,y)\neq (x^*,y^*)$. }
 \end{assumption}  
\begin{assumption}
    \rm{The source is located in the safe region and the local source field signal can be measured at any free location in the cluttered environment. Furthermore, we assume that the minimum distance between obstacles is given by $d_{\text{min}}>0$. } 
\end{assumption}
\begin{remark}
    \rm{The robot is equipped with an on-board sensor system and it is only able to obtain the local measurements, including the source signal, and the distance \& bearing with respect to the environmental obstacles. The signal distribution function $J(x,y)$ and the positions of obstacles are both unknown apriori. }
\end{remark}
 
Consider a unicycle robot that is equipped with local sensors for measuring the source gradient and the distance to obstacles in the vicinity. Its dynamics is given by
\begin{equation}\label{eq:unicycle_model}
\bbm{
\dot{x} \\
\dot{y}  \\
\dot{\theta}}=
\bbm{\cos(\theta) & 0\\
\sin(\theta) & 0 \\
0 & 1}\bbm{v \\ \omega},
\end{equation}
where $\sbm{x(t) & y(t)}^\top$ is the 2D planar robot's position with respect to a global frame of reference and $\theta(t)$ is the heading angle. As usual, the control inputs are the longitudinal velocity input variable $v(t)$ and the angular velocity input variable $\omega(t)$. 

Let us now present a general safety-guaranteed problem for general affine nonlinear systems given by 
\begin{equation}\label{eq:dynamics_cbf}
   \dot{\xi}=f(\xi)+g(\xi)u, \quad \xi(0)=\xi_0 \in \mathcal{X}_0 
\end{equation}
where $\xi \in {\mathcal{X}} \subset \mathbb{R}^n$ is the state, $u \in \mathbb{R}^m$ is the control input, $f(\xi):\mathbb{R}^n \rightarrow \mathbb{R}^n$ and $g(\xi):\mathbb{R}^n \rightarrow \mathbb{R}^{n\times m}$ are assumed to be locally Lipschitz continuous. It is assumed that the state space $\mathcal X$ can be decomposed into a safe set $\mathcal{X}_s$ and unsafe set $\mathcal{X}_u$, such that $\mathcal{X}_s \cup  \mathcal{X}_u={\mathcal{X}} \subset \mathbb{R}^2$. Furthermore, we assume that the safe set $\mathcal X_s$ can be characterized by a continuously differentiable function $h:\mathbb{R}^n \rightarrow \mathbb{R}$ so that 
\begin{align} \label{eq:safeset0}
    \mathcal{X}_s & = \left \{ \xi  \in {\mathcal{X} \subset}\mathbb{R}^n : h(\xi )\geq 0\right \} \\
\label{eq:safeset1}
    \partial \mathcal{X}_s &= \left \{\xi  \in {\mathcal{X} \subset}\mathbb{R}^n : h(\xi ) = 0 \right \} \\
\label{eq:safeset2}
    \text{Int}(\mathcal{X}_s) & = \left \{\xi  \in {\mathcal{X} \subset}\mathbb{R}^n : h(\xi ) > 0 \right \}
\end{align}
hold where $\partial \mathcal{X}_s$ and $\text{Int}(\mathcal{X}_s)$ define the boundary and interior set, respectively. 
\begin{definition}
(Forward invariant) \rm{The set $\mathcal{X}_s$ is called \emph{forward invariant} for \eqref{eq:dynamics_cbf} if the implication $\xi _0 \in \mathcal{X}_s \Rightarrow \xi (t) \in \mathcal{X}_s$ for all $t$ holds. Then the system \eqref{eq:dynamics_cbf} is safe if $\mathcal{X}_s $ is forward invariant. }
\end{definition}
\begin{definition}\label{def:relative_degree}
     (Relative degree,\cite{Khalil})\,\rm{A sufficiently smooth function $h:\mathbb{R}^n\rightarrow \mathbb{R}$ is said to have \textit{relative degree} $1 \leq \rho \leq n$ with respect to the system \eqref{eq:dynamics_cbf} in a region $\mathcal{R} \subset \mathcal{X}$ if 
    \begin{enumerate}
       \item $L_gL^{i-1}_fh(\xi) = \bm{0}_{1\times m}\,\, \forall 1\leq i\leq  \rho-1$;
        \item $L_gL^{\rho-1}_fh(\xi) \neq \bm{0}_{1\times m} \,\,  \forall \xi\in\mathcal{R}$.
    \end{enumerate}}
\end{definition}
For simplicity of formulation and presentation, we assume that there is a finite number of obstacles, labeled by $\{\mathcal O_1,\mathcal O_2,\ldots,\mathcal O_q\}$, where $q$ is the number of obstacles and each obstacle $\mathcal O_i$ is an open bounded set in $\rline^2$. 
\begin{remark}
    Throughout the paper, we analyze the safe interaction of unicycle robot regarding the fixed and static obstacles $\mathcal{O}$ in the cluttered environment. 
\end{remark}
\emph{Safety-guaranteed source seeking control problem:} Given the unicycle robotic system \eqref{eq:unicycle_model} with the initial condition $\sbm{x_0 & y_0 & \theta_0}^\top \in \mathcal{X}_0$ and with a given set of safe states $\mathcal{X}_s := \Omega \times \rline$ where $\Omega\subset \mathbb{R}^2\backslash \left(\cup_i\mathcal O_i\right)$ is the set of safe states in the 2D plane, design a feedback control law $v^*, \omega^*$, such that 
\begin{equation}\label{eq:convergence-0}
\lim_{t\to\infty} \left\| \bbm{x(t)-x^*\\y(t)-y^*} 
\right\| = 0, 
\end{equation}
and the unicycle system is safe at all time, i.e., 
\begin{equation}\label{eq:safe_goal}
\setlength{\abovedisplayskip}{5pt}
\setlength{\belowdisplayskip}{5pt}
\sbm{x(t)\\y(t)\\ \theta(t)} \in \mathcal{X}_s, \qquad \forall t\geq 0.
\end{equation}

\vspace{-0.2cm}
\subsection{Source-seeking Control with Unicycle Robot}\label{sec:source_seeking}
In our recent work \cite{Li}, a source-seeking control algorithm is proposed for a unicycle robot using the gradient-ascent approach. We demonstrated that the proposed controller can steer the robot towards a source, e.g. \eqref{eq:convergence-0} holds for all initial conditions $(x(0), y(0),\theta(0)) \in \mathcal X_0$, based solely on the available gradient measurement of the source field and robot's orientation. Specifically, the control law generates longitudinal velocity  $ {v_s}=F(\nabla J(x,y),\theta)$ and angular velocity $\omega_s=G(\nabla J(x,y),\theta)$ as
\begin{equation}\label{eq:SS}
{u_s = \begin{bmatrix}
 v_s\\ 
 \omega_s
\end{bmatrix}}=\begin{bmatrix}
 k_1 \left\langle \vec{o}(\theta),\nabla J(x,y) \right\rangle  \\ 
 -k_2 \left\langle \vec{o}(\theta),{\nabla J^\perp}(x,y) \right\rangle
\end{bmatrix},
\end{equation}
where $\vec{o}(\theta)=\sbm{\cos(\theta) &
\sin(\theta)}$ is the robot's unit vector orientation, the variable $\nabla J(x,y)$ is the source's field gradient measurement, e.g., $\nabla J(x,y)=\sbm{\frac{\partial J}{\partial x}(x,y) & \frac{\partial J}{\partial y}(x,y) }$ with $\nabla J^\perp$ denoting the orthogonal unit vector of $\nabla J$, the control parameters are set as $k_1>0, k_2>0$ in the concave source field. Note that the gradient measurement $\nabla J(x,y)$ can be obtained from the local sensor system on the robot, hence apriori knowledge of the source field is not necessary in practice. We refer to our previous work \cite[Section IV-B]{Li} on the local practical implementation of such source-seeking control. 

\vspace{-0.3cm}
\section{Control design and analysis} \label{sec:control_design}
The collision-free source-seeking problem aims to design a control law using local measurement such that the unicycle robot states remain within the safe set $\mathcal{X}_s$ for all positive time (or, equivalently $\mathcal{X}_s$ is forward invariant, \eqref{eq:safe_goal} holds) and asymptotically converge to the unknown source's location (i.e. \eqref{eq:convergence-0} holds). In this section, we present the CBF-based control design that integrates our source-seeking controller \eqref{eq:SS} with the safety constraints, and mainly discuss the construction of CBF that tackles the mixed relative degree problem.
\begin{definition}
    (Control barrier function) \rm{For the dynamical system \eqref{eq:dynamics_cbf}, a $C^1$  
{\em Control Barrier Function} (CBF) is a non-negative function $h:\rline^n\to\rline$ satisfying \eqref{eq:safeset0}-\eqref{eq:safeset2}, and $h(\xi(t))$ remains positive along the trajectories of the closed-loop system (for a given control law $u=k(\xi)$) for all positive time.
In particular, the Zeroing Control Barrier Function (ZCBF) $h$ has to satisfy 
\begin{equation}\label{eq:ZCBF_pro}
    \sup_{u \in \mathcal{U}}[L_fh(\xi )+L_gh(\xi )u+\alpha(h(\xi ))]\geq 0, 
\end{equation}
for all $\xi  \in \mathcal{X}$, where $\mathcal{X}_s \subseteq \mathcal{X}\subset \mathbb{R}^n$ and $\alpha \in \mathcal K_e$ \cite[Def.~5]{Ames_TAC}.}
\end{definition}

Using the notations of a safe set  \eqref{eq:safeset0}-\eqref{eq:safeset2} that will be incorporated in $h$, the following safe set is defined for the safety-guaranteed navigation problem of our mobile robot while seeking the source: 
\begin{equation}\label{eq:safeset}
    \mathcal{X}_s = \left\{\sbm{x \\ y \\ \theta}\in \rline^3\, | \, \text{dist}\left(\sbm{x\\y},\mathcal O_i\right) - d_{\text{safe}} > 0, i=1,\ldots,q \right\}
\end{equation}
where dist$\left(\sbm{x\\y},\mathcal O_i\right)$ is the Euclidean distance of the robot position $\sbm{x\\y}$ to the obstacle $\mathcal O_i$ and $d_{\text{safe}}\in\mathbb{R}_+$ is a prescribed safe distance margin around the obstacle. We can denote by $d_{\text{ro}} =  \text{dist}\left(\sbm{x\\y},\mathcal O_i\right) - d_{\text{safe}}$. In other words, the safe sets are the domain outside the ball of radius $d_{\text{safe}}$ around the obstacles $\mathcal O_i$. Here, we do not prescribe any specific form of the obstacle set $\mathcal O_i$, as the robot's safety only concerns the Euclidean distance between the obstacle's surface and its position. 

The first challenge is the relative degree problem when the source-seeking control law \eqref{eq:SS} is integrated with the control barrier function to guarantee safe passage. The standard use of $h(x,y)$ that depends only on the distance measurement is not suitable for the unicycle-model robot \eqref{eq:unicycle_model} as it will result in a mixed relative degree system. In this case, the condition \eqref{eq:ZCBF_pro} leads to the situation where angular velocity input $\omega$ cannot be used to influence $h(\xi)$ since $\frac{\partial h}{\partial \xi}\sbm{0\\0\\1}=0$. 

The general solution is to apply a virtual leading point in front of the robot with the near-identity diffeomorphism \cite{Glotfelter,Majd,Yaghoubi}, which would result in an undesired offset with respect to the robot's actual position. To avoid the offset limitation, we tackle the mixed relative degree in the unicycle model \eqref{eq:unicycle_model} by considering an extended state space $\xi=\sbm{x & y & v & \dot x & \dot y}^\top$ such that the extended dynamics is given by 
\begin{equation}\label{eq:dynamics}
\setlength{\abovedisplayskip}{5pt}
\setlength{\belowdisplayskip}{5pt}
    \begin{aligned}
        \dot{\xi }= \sbm{
  \dot{x}\\ 
  \dot{y}\\ 
  \dot{v}\\
  \ddot{x} \\
  \ddot{y}
}
&=\sbm{
  v\cos(\theta)\\ 
  v\sin(\theta)\\ 
  0\\ 
  0\\ 
  0
}+\sbm{
  0 & 0 \\ 
  0 & 0 \\ 
  1 & 0 \\ 
  \cos(\theta) & -v\sin(\theta)\\ 
  \sin(\theta) & v\cos(\theta)
}\sbm{
  a\\ 
  \omega
} 
\\& = \underbrace{ \sbm{
  \xi_4\\ 
 \xi_5\\ 
  0\\ 
  0\\ 
  0
}}_{f(\xi )}+\underbrace{\sbm{
  0 & 0 \\ 
  0 & 0 \\ 
  1 & 0 \\ 
  \frac{\xi_4}{\xi_3}  & -\xi_5\\ 
  \frac{\xi_5}{\xi_3} & \xi_4
}}_{g(\xi )}\underbrace{ \sbm{
  a\\ 
  \omega
}}_{u}
    \end{aligned}
\end{equation}
where $u =\sbm{ a \\ \omega} \in \mathbb{R}^2$ is the new control input with the longitudinal acceleration $a$ and the angular velocity $\omega$. Since we consider only the forward direction of the unicycle to solve the safety-guaranteed source seeking control problem, the state space of $\xi$ is given by $\Xi\subset \rline^2\times (0,\infty) \times \rline^2$ where the longitudinal velocity $\xi_3$ is defined on positive real. It follows that the function $f$ is globally Lipschitz in $\Xi$ and $g$ is locally Lipschitz in $\Xi$. Additionally, for this extended state space, the set $\mathcal X_s$ in \eqref{eq:safeset} can be extended into  
\begin{equation}\label{eq:safeset_extended}
    \mathcal{X}_{s,\text{ext}} = \left\{\xi \in \Xi \, \left|  \,  \text{dist}\left(\sbm{\xi_1\\\xi_2},\mathcal O_i\right) \right. - d_{\text{safe}} > 0, i=1,\ldots,q \right\}
\end{equation}

Using the extended dynamics \eqref{eq:dynamics}, a zeroing control barrier function $h(\xi)$ can be proposed which has a uniform relative degree of $1$. In particular, it is described by the new state variables $\xi$ as follows
\begin{equation}\label{eq:zcbf}
    h(x,y,v,\dot{x},\dot{y}) = D(x,y)  e^{-P(x,y,v,\dot{x},\dot{y})}  
\end{equation}
where $D$ is a smooth function such that: 
1). $D(x,y)=0$ when $d_{\text{ro}}:= \text{dist}\left(\sbm{x\\y},\mathcal O_i\right) - d_{\text{safe}} = 0$ (i.e., the distance of robot $\sbm{x\\y}$ to its closest obstacle reaches the safe margin $d_{\text{safe}}$); 2). $D(x,y)<0$ when $d_{\text{ro}}<0$; 3). $D(x,y)= c>0$ when $d_{\text{ro}}\geq d_{\text{cons}}$. Note that for well-posedness, we set $d_{\text{cons}}:=  \frac{d_{\text{min}}}{2}-d_{\text{safe}}$, where $d_{\text{min}}$ is the minimum separation distance between obstacles;
and otherwise 4). $D(x,y)$ is non-decreasing as a function of distance to the closest obstacle.
The function $P$ in \eqref{eq:zcbf} is defined by
\begin{equation*}
\setlength{\abovedisplayskip}{5pt}
\setlength{\belowdisplayskip}{5pt}
\begin{aligned}
      P(x,y,v,\dot{x},\dot{y}) & = \left \langle \vec{o}_{r}, \vec{o}_{ro}\right \rangle+v\delta,
\end{aligned}
\end{equation*}
and $\left \langle \cdot,\cdot \right \rangle$ denotes the usual inner-product operation, $\vec{o}_{r} = \sbm{ \frac{\dot{x}}{v} & \frac{\dot{y}}{v}}$ gives the unit orientation vector of robot, and $\vec{o}_{ro}$ is the unit bearing vector pointing to the closest point of the nearest obstacle as
\begin{equation}\label{eq:oro}
\setlength{\abovedisplayskip}{5pt}
\setlength{\belowdisplayskip}{5pt}
    \vec{o}_{ro} = \frac{\sbm{
    x_{\text{obs},i}-x\\ 
    y_{\text{obs},i}-y}^\top}{
   \text{dist}\left(\sbm{x\\y},\mathcal O_i\right)} = \frac{\sbm{
    x_{\text{obs},i}-x\\ 
    y_{\text{obs},i}-y}^\top}{ \left\|\sbm{x_{\text{obs},i} \\ y_{\text{obs},i}}-\sbm{x\\y}\right\|},
\end{equation}
where $(x_{\text{obs},i}, y_{\text{obs},i})$ denotes the point on the boundary of $\mathcal O_i$ that is the closest to $(x,y)$. 

Hence, the scalar function $D(x,y)$ is a smooth function of the distance to the closest obstacle with an added safe distance margin of $d_{\text{safe}}\in \mathbb{R}_+$ and is tapered off to a constant $c$ when it is far enough from the obstacles. The function $P$ denotes the projection of the robot's orientation $\vec{o}_r$ onto the bearing vector $\Vec{o}_{ro}$, and the sufficiently small $\delta \in \mathbb{R}_+$ is a directional offset constant to ensure that $\left \|L_gh\right\|\neq0$ within the interior of the safe set $\mathcal{X}_{s,\text{ext}}$. Accordingly, as the robot gets closer to any obstacle, both scalar functions $D(x,y)$ and $e^{-P(x,y,v,\dot{x},\dot{y})}$ become smaller. It is equal to zero whenever $D=0$, e.g., $\text{dist}\left(\sbm{x\\y},\mathcal O_i\right)=d_{\text{safe}}$.

\begin{proposition}\label{pro:relative_degree}
    Consider the unicycle model with the extended dynamics \eqref{eq:dynamics} defined in $\mathcal{X}_{s,\text{ext}}$. Then the relative degree of the proposed ZCBF \eqref{eq:zcbf} with respect to the control input variable $a$ and $\omega$ is $1$ uniformly.
\end{proposition}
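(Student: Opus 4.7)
The plan is to verify Definition~\ref{def:relative_degree} directly. Since relative degree one requires only that $L_g h(\xi)\neq \bm{0}_{1\times 2}$ on the region of interest, no iterated Lie derivatives are needed, and the work reduces to a careful partial-derivative computation on $h = D(x,y) e^{-P(\xi)}$ paired with the columns of $g(\xi)$ in \eqref{eq:dynamics}.

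First I would form $L_g h = (\partial h/\partial \xi)\, g(\xi)$. Because the top two rows of $g(\xi)$ vanish, only the partials of $h$ with respect to $v$, $\dot x$, $\dot y$ can contribute, and among the two factors of $h$ only $P$ depends on these variables. Using $\vec o_{r} = (\dot x/v,\, \dot y/v)$ and noting that $\vec o_{ro}$ depends solely on $(x,y)$, the chain rule gives
\[
\partial_v P = \delta - \tfrac{1}{v}\langle \vec o_{r},\vec o_{ro}\rangle,\qquad \partial_{\dot x} P = \tfrac{1}{v}\, o_{ro,x},\qquad \partial_{\dot y} P = \tfrac{1}{v}\, o_{ro,y},
\]
each entering $\partial h/\partial \xi$ as $-D e^{-P}$ times the above.

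Second, I would evaluate the two scalar components of $L_g h$ separately. Reading off the columns of $g(\xi)$ and substituting $\cos\theta = \dot x/v$, $\sin\theta = \dot y/v$, the component associated with the acceleration input collapses through a cancellation:
\[
L_{g_1} h = -De^{-P}\Bigl(\delta - \tfrac{1}{v}\langle \vec o_{r},\vec o_{ro}\rangle\Bigr) - De^{-P}\cdot \tfrac{1}{v}\langle \vec o_{r},\vec o_{ro}\rangle = -D\delta\, e^{-P}.
\]
This cancellation is precisely why the additive term $v\delta$ was placed inside $P$ in \eqref{eq:zcbf}: without it, $L_{g_1} h$ would vanish whenever the heading aligns with the bearing to the obstacle. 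For completeness I would also record $L_{g_2} h = -De^{-P}\langle \vec o_{r}^{\perp},\vec o_{ro}\rangle$, which may vanish on a measure-zero set but is not needed for the conclusion.

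Third, I would invoke the defining properties: inside $\mathcal X_{s,\text{ext}}$ we have $d_{\text{ro}}>0$, so by the construction of $D$ one has $D(x,y)>0$; since $\Xi$ restricts $v$ to positive reals, $e^{-P}$ is finite and strictly positive; and $\delta>0$ is fixed. Hence $L_{g_1} h(\xi) < 0$ for every $\xi \in \mathcal X_{s,\text{ext}}$, so $L_g h(\xi)\neq \bm{0}_{1\times 2}$ uniformly, which is exactly the relative-degree-one condition with respect to the joint input $u=\sbm{a&\omega}^\top$. The main obstacle is not computational but conceptual, namely to verify that the cancellation producing $L_{g_1} h = -D\delta e^{-P}$ is not just a point-wise coincidence but a global identity driven by the design of $P$; once that is understood the proposition follows by inspection of the signs of $D$, $\delta$, and $e^{-P}$ on $\mathcal X_{s,\text{ext}}$.
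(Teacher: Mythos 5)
Your proof is correct and follows essentially the same route as the paper: compute $L_gh$, note that $D$ contributes nothing through $g$ since it depends only on $(x,y)$, and observe that the first component collapses to $-D\delta\,e^{-P}$, which is nonzero on $\mathcal{X}_{s,\text{ext}}$ because $D>0$, $\delta>0$ and $e^{-P}>0$ there. (Only your aside is slightly imprecise: without the $v\delta$ term, $L_{g_1}h$ would vanish identically, and it is the full row vector $L_gh$ that would vanish when the heading is parallel to the bearing; this does not affect the argument.)
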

\begin{proof}
By applying the distance function in the ZCBF \eqref{eq:zcbf} and consider the new dynamics $\dot \xi = f(\xi)+g(\xi)u$ as in \eqref{eq:dynamics}, the Lie derivative of $h(\xi)$ along the vector field $g(\xi)$ can be calculated by
     $L_gh(\xi)=\frac{\partial h}{\partial \xi}g(\xi)= \frac{\partial D}{\partial \xi}g(\xi)e^{-P} + \frac{\partial (e^{-P})}{\partial \xi}g(\xi)D$.
However, as $D(x,y)$ is a function of state $\xi_1:=x$ and $\xi_2:=y$ in \eqref{eq:zcbf}, with $g(\xi)$ as in \eqref{eq:dynamics}, it is straightforward that $\frac{\partial D}{\partial \xi}g(\xi) = \bm{0}_{1\times2}$ and hereby $L_gh(\xi) = \frac{\partial (e^{-P})}{\partial \xi}g(\xi)D$.
Accordingly, it can be expressed as
\begin{equation}\label{eq:Lgh}
\setlength{\abovedisplayskip}{2pt}
\setlength{\belowdisplayskip}{2pt}
        L_gh(\xi)  = \frac{\partial (e^{-P})}{\partial \xi}D
        = \begin{bmatrix}
-D\delta e^{-(p_o+v\delta)} &  D {{p}'_o}e^{-(p_o+v\delta)}
\end{bmatrix}  
\end{equation}  
 where $ p_o =  \left \langle  \vec{o}_r, \vec{o}_{ro} \right \rangle$ describes the projection of the robot orientation $\vec{o}_r$ onto the unit bearing vector $\vec{o}_{ro}$ as in \eqref{eq:zcbf} and \eqref{eq:oro}, and the notation ${p}'_o= \left \langle  \vec{o}_r, \vec{o}^\perp_{ro} \right \rangle$ denotes the projection onto the orthogonal bearing vector $\vec{o}^\perp_{ro}$. It is straightforward that the $ L_gh(\xi) = \bm{0}_{1\times 2}$ holds only when $D=0$. As given in \eqref{eq:zcbf}, $D(x,y)$ is a smooth function with $D(x,y)=0$ iff $d_{\text{ro}}=0$. In other words, $\left \| L_gh(\xi) \right \| \neq 0 $ for all $\xi \in { \mathcal{X}_{s,\text{ext}}}$. 
\end{proof}

We can now combine the source seeking control law as in \eqref{eq:SS} with the ZCBF-based safety constraint by solving the following quadratic programming (QP) problem: 
\begin{align}\label{eq:QP-zcbf}
    u^* & =\mathop{\mathrm{argmin}}\limits_{u \in \mathcal{U}} \frac{1}{2}\left \| u - { u_{\text{ref}}} \right \|^2 
\\  \label{eq:constraint_ZCBF_1}
  \text{s.t. }\quad & L_fh(\xi )+L_gh(\xi )u+\alpha(h(\xi ))\geq 0 
\end{align}
where $u_{\text{ref}}:= \sbm{a_s \\ \omega_s}$ is the reference signal for source-seeking and it is derived from $u_s=\sbm{v_s\\ \omega_s}$ in \eqref{eq:SS}. Since the right-hand side of 
\eqref{eq:SS} is continuously differentiable, the reference acceleration signal $a_s$ can be derived straightforwardly by differentiating $v_s$. In practice, this can be numerically computed.

The resulting control law $u^*$ ensures that it stays close to the reference source-seeking law while avoiding obstacles through the fulfillment of the safety constraint in \eqref{eq:constraint_ZCBF_1} that amounts to having  
$\dot{h}(\xi )\geq  -\alpha(h(\xi))$. As will be shown below in Theorem \ref{thm:1}, by denoting $H(\xi):=L_fh(\xi)+L_gh(\xi){ u_{\text{ref}}(\xi)}+\alpha(h(\xi))$, the optimal solution $u^*$ can be expressed analytically as 
\begin{equation}
u^*(\xi) = \left\{
\begin{array}{ll}
 \sbm{\dot{v}_s \\ \omega_s}  - \frac{L_gh(\xi)^\top}{\left \| L_gh(\xi) \right \| ^2}H(\xi) & \text{if } H(\xi)<0 \text{ and}, \\
 & \left \| L_gh(\xi) \right \|\neq 0;  \\
 \sbm{\dot{v}_s \\ \omega_s} & \text{otherwise} 
\end{array}
\right.
\end{equation}
Note that we use the extended system dynamics \eqref{eq:dynamics} for solving the mixed relative degree issue, while the computed reference control {$u_{\text{ref}}$} and optimal controller $u^*$ are still implementable in the original state equations of the unicycle. One needs to integrate the first control input in $u^*$ to get the longitudinal velocity input $v^*(t)=\int_0^tu^*_1(\tau)\dd\tau$.  

In the following, we will analyze the closed-loop system of the extended plant dynamics in \eqref{eq:dynamics} with the ZCBF-based controller, which is given by the solution of QP problem in \eqref{eq:QP-zcbf}. In particular, we will provide an analytical expression of $u^*$, show the collision avoidance property, and finally present the asymptotic convergence to the source $(x^*,y^*)$.

\begin{theorem}\label{thm:1}
Consider the extended system of unicycle robot in $\eqref{eq:dynamics}$ with globally Lipschitz $f$ and locally Lipschitz $g$. Let the ZCBF $h$ be given as in \eqref{eq:zcbf} with $\mathcal{X}_{s,\text{ext}}$ as in \eqref{eq:safeset_extended} and $u_{\text{ref}}$ be derived from \eqref{eq:SS}. The obstacle-occupied environment is covered by a twice-differentiable, strictly concave source field $J$ which has a unique global maximum at the source location $(x^*,y^*)$.
Then the following properties hold:
\begin{description}
\item[{\bf P1}.] The QP problem \eqref{eq:QP-zcbf}-\eqref{eq:constraint_ZCBF_1} admits a unique solution $u^*(\xi)$ that is locally Lipschitz in  $\mathcal{X}_{s,\text{ext}}$;
\item[{\bf P2}.] The safe set $\mathcal X_{s,\text{ext}}$ is forward invariant, e.g., the state $\xi$ stays in safe set and $\sbm{\xi_1\\ \xi_2}$ avoids collision with obstacles, i.e. \eqref{eq:safe_goal} holds;
\item[{\bf P3}.] The unicycle robot asymptotically converges to the source, i.e. \eqref{eq:convergence-0} holds. 
\end{description}
\end{theorem}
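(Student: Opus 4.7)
The plan is to dispatch the three properties in order, leaning on the standard CBF--QP machinery for \textbf{P1} and \textbf{P2} and then doing geometric/Lyapunov work for \textbf{P3}. For \textbf{P1}, I would observe that the objective $\frac{1}{2}\|u-u_{\text{ref}}\|^2$ is strongly convex and the single affine inequality $L_fh(\xi)+L_gh(\xi)u+\alpha(h(\xi))\geq 0$ is always feasible, since Proposition~\ref{pro:relative_degree} guarantees $\|L_gh(\xi)\|\neq 0$ on $\mathcal{X}_{s,\text{ext}}$ (pick $u$ aligned with $L_gh(\xi)^\top$ with sufficiently large magnitude). Writing the KKT conditions then yields exactly the two-branch formula for $u^*(\xi)$ in the excerpt: inactive constraint ($H(\xi)\geq 0$) gives $u^*=u_{\text{ref}}$, while active constraint ($H(\xi)<0$) gives the orthogonal projection of $u_{\text{ref}}$ onto the half-space boundary. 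Local Lipschitz continuity then follows from the explicit formula: both branches are smooth in $\xi$ (since $f,g,h,\alpha,u_{\text{ref}}$ are locally Lipschitz and $\|L_gh\|$ is bounded away from zero on compact subsets of $\mathcal{X}_{s,\text{ext}}$), and the two pieces agree on $\{H=0\}$ where the correction vanishes, so $u^*$ is locally Lipschitz across the switching surface as well. This local Lipschitzness also guarantees local existence and uniqueness of the closed-loop trajectory.

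For \textbf{P2}, I would apply the standard ZCBF forward-invariance argument \cite{Ames_TAC}. By construction, the QP solution satisfies $\dot h(\xi(t))=L_fh(\xi)+L_gh(\xi)u^*\geq -\alpha(h(\xi))$, and a comparison lemma applied to the scalar majorant $\dot\eta=-\alpha(\eta)$ with $\alpha\in\mathcal K_e$ yields $h(\xi(t))\geq 0$ for all $t\geq 0$ whenever $h(\xi(0))\geq 0$. The construction of $h=D(x,y)\,e^{-P}$ with $e^{-P}>0$ makes $\operatorname{sign}(h)=\operatorname{sign}(D)$, and property~2 of $D$ gives $D(x,y)\geq 0 \Leftrightarrow \operatorname{dist}((x,y),\mathcal O_i)\geq d_{\text{safe}}$. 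Hence $\{h\geq 0\}$ coincides with $\mathcal{X}_{s,\text{ext}}$ at the position level, and forward invariance of the former yields \eqref{eq:safe_goal}.

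For \textbf{P3}, which I expect to be the main obstacle, I would use the candidate Lyapunov function $V(\xi)=J(x^*,y^*)-J(\xi_1,\xi_2)\geq 0$ and analyze $\dot V$ along the closed-loop system. When $H(\xi)\geq 0$, $u^*=u_{\text{ref}}=(\dot v_s,\omega_s)$, so the $(x,y,\theta)$-projection of the closed loop is exactly the gradient-ascent law \eqref{eq:SS} studied in \cite{Li}; by strict concavity of $J$ we get $\dot V\leq 0$ with equality only at the unique critical point $(x^*,y^*)$. When $H(\xi)<0$, the CBF correction $-\tfrac{L_gh^\top}{\|L_gh\|^2}H(\xi)$ perturbs $(a,\omega)$ and may temporarily make $\dot V\not\leq 0$. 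The delicate step is ruling out convergence to a spurious equilibrium on the obstacle layer $\{d_{\text{ro}}=0\}$, where $u_{\text{ref}}$ could point into the obstacle and the correction could cancel forward progress. I would argue this cannot happen persistently by combining: (i)~strict concavity of $J$, so $\nabla J$ vanishes only at the source, which lies in the open safe region by Assumption~\ref{ass:J} and the fact that $\mathcal X_s$ contains the source; (ii)~the separation $d_{\min}>0$ with $d_{\text{cons}}<d_{\min}/2$, which ensures the ``active obstacle'' is unique on the CBF-active layer so the correction is well-defined and tangential; and (iii)~the directional offset $v\delta$ in $P$, which by the construction in \eqref{eq:Lgh} keeps $L_gh\neq 0$ and thereby provides a nonzero tangential control authority even when $\vec o_r\perp \vec o_{ro}$. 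Together with boundedness of obstacles, these imply that from any initial condition the robot can slide tangentially along the safety boundary until it regains a direction in which $\langle \vec o(\theta),\nabla J\rangle>0$. Finally, I would apply LaSalle's invariance principle on a forward-invariant sublevel set of $V$ intersected with $\mathcal{X}_{s,\text{ext}}$ (forward invariant by \textbf{P2} and the properness of $V$ from Assumption~\ref{ass:J}) to conclude that the $\omega$-limit set consists of equilibria with $\nabla J=0$, and hence that the robot asymptotically converges to $(x^*,y^*)$, establishing \eqref{eq:convergence-0}.
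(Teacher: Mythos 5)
Your treatment of \textbf{P1} and \textbf{P2} follows essentially the same route as the paper: the KKT analysis of the shifted QP yields the same two-branch closed form for $u^*$, local Lipschitz continuity comes from the explicit expression together with Proposition~\ref{pro:relative_degree} (the paper writes $e^*=\ell_1(H(\xi))\,\ell_3(\xi)$ with $\ell_3=-L_gh^\top/\|L_gh\|^2$ and composes Lipschitz pieces, which is your argument), and forward invariance is obtained from the Lipschitz QP controller satisfying the ZCBF inequality, via \cite[Corollary 2]{Ames_TAC} in the paper and via the equivalent comparison-lemma argument in your write-up. The only small caveat is that your feasibility remark implicitly takes $\mathcal U$ unbounded, which the paper also does implicitly, so nothing to flag there.

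The genuine gap is in \textbf{P3}. You concede that $\dot V$ can be positive while the CBF constraint is active, and then nevertheless invoke LaSalle ``on a forward-invariant sublevel set of $V$ intersected with $\mathcal X_{s,\text{ext}}$.'' Sublevel sets of $V=J(x^*,y^*)-J(\xi_1,\xi_2)$ are \emph{not} forward invariant for the closed loop precisely because $\dot V$ may be positive during CBF-active phases; \textbf{P2} gives invariance of the safe set and radial unboundedness of $J$ gives compactness of sublevel sets, but neither supplies the hypothesis $\dot V\le 0$ that LaSalle requires, so the conclusion that the $\omega$-limit set consists of points with $\nabla J=0$ does not follow. The intermediate claims (i)--(iii) do not close this hole: ``the robot can slide tangentially along the safety boundary until it regains a direction with $\langle \vec o(\theta),\nabla J\rangle>0$'' is exactly the statement that needs proof (it is the possibility of getting stuck, chattering, or cycling on the CBF-active layer that must be excluded), and the correction term $-L_gh^\top H/\|L_gh\|^2$ acts in the $(a,\omega)$ input space, so calling it ``tangential'' is not justified as stated. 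The paper avoids a Lyapunov/LaSalle argument altogether: it splits into the inactive case $H(\xi)\ge 0$, where the closed loop coincides with the gradient-ascent law of \cite{Li} and its convergence proof is inherited, and the active case $H(\xi)<0$, $\|L_gh(\xi)\|\neq 0$, where substituting $u^*$ gives $\dot h=-\alpha(h(\xi))$, which cannot vanish in the interior where $h>0$; hence the trajectory cannot remain stationary (or on an $h$-isoline) anywhere except at the source. If you wish to keep your Lyapunov route, you would need an additional quantitative step, e.g.\ showing that the CBF-active phases are transient with bounded total effect on $V$, or appealing to an invariance principle for switched/hybrid systems; as written, the LaSalle step fails.
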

\begin{proof}
We will first prove the property {\bf P1}. Let us rewrite \eqref{eq:QP-zcbf}-\eqref{eq:constraint_ZCBF_1} into the following QP terms of a shifted decision variable {$e= u- u_{\text{ref}}\in\mathbb{R}^{2\times 1}$ with $u_{\text{ref}} =\sbm{a_s\\ \omega_s}=\sbm{\dot{v}_s\\ \omega_s}$ } as 
 \begin{align}
         \qquad & \qquad  \qquad e^* =\mathop{\mathrm{argmin}}\limits_{u \in \mathcal{U}} \frac{1}{2} e^\top e
\\  \label{cons_z0}
          \text{s.t.} \qquad &  L_fh(\xi )+L_gh(\xi )\left(e+\sbm{\dot{v}_s \\ \omega_s}\right)+\alpha(h(\xi ))\geq 0
\end{align}
Correspondingly, define a Lagrangian function $L$ that incorporates the constraint \eqref{cons_z0} by a Lagrange multiplier $\lambda$ as follows
\begin{equation}
\setlength{\belowdisplayskip}{0pt}
    L(e,\lambda) = \frac{1}{2}e^\top e-\lambda\left(L_fh(\xi )+L_gh(\xi )\left(e+\sbm{\dot{v}_s \\ \omega_s}\right)+\alpha(h(\xi ))\right)
\end{equation}
This QP problem for optimal safe control can be solved through the following Karush–Kuhn–Tucker (KKT) optimality conditions
\begin{equation}\label{eq:KKT_zcbf}
    \left.\begin{matrix}
    \begin{aligned}
\frac{\partial L(e^*,\lambda^* )}{\partial e^* } = {e^*}^\top-\lambda^* L_gh(\xi ) = \sbm{0&0} &\\ 
\lambda^*\left(L_fh(\xi )+L_gh(\xi )\left(e^*+\sbm{\dot{v}_s &\\ \omega_s}\right)+\alpha(h(\xi ))\right)=0    &\\ 
 L_fh(\xi )+L_gh\left(\xi )\left(e^*+\sbm{\dot{v}_s &\\ \omega_s}\right)+\alpha(h(\xi )\right)\geq 0  & \\ 
\lambda^* \geq 0
\end{aligned}
\end{matrix}\right\}
\end{equation}
Based on the property of Lagrange multiplier $\lambda^* \geq 0$ and \eqref{eq:KKT_zcbf}, we derived the optimal solution $u^*$ (as well as $e^*$) in the following two cases: when  $\lambda^* > 0$ (i.e., $\lambda^* \neq 0$), or $\lambda^* = 0$. For convenience, let us define 
\begin{equation}\label{eq:H}
 H(\xi) := L_fh(\xi ) + L_gh(\xi ) \sbm{\dot{v}_s \\ \omega_s}+\alpha(h(\xi )).    
\end{equation}

\textbf{\emph{Case-1: $\lambda^* = 0$}}. 
From the first condition in \eqref{eq:KKT_zcbf} and with $\lambda^*=0$, it is clear that $e^* = \lambda^* L_gh(\xi )^\top=  \sbm{ 0 \\ 0}$. By the definition of $e$, this implies that $ \sbm{0\\0} = e^* = u^*-\sbm{\dot{v}_s\\ \omega_s}$, i.e., $u^* = \sbm{\dot{v}_s\\ \omega_s} = u_{\text{ref}}$ holds. By substituting $e^*=\sbm{0\\0}$ to the third condition of \eqref{eq:KKT_zcbf}, we obtain that 
\begin{equation}
      L_fh(\xi )+L_gh(\xi )\sbm{\dot{v}_s \\ \omega_s}+\alpha(h(\xi))\geq 0 
\end{equation}
or in other words, $H(\xi) \geq 0$ holds.

\textbf{\emph{Case-2: $\lambda^* > 0$}}. 
In order to satisfy the second condition in  \eqref{eq:KKT_zcbf} for $\lambda^* > 0$, it follows that 
\begin{equation}\label{eq:con}
     L_fh(\xi )+L_gh(\xi )\left(e^*+\sbm{\dot{v}_s \\ \omega_s}\right)+\alpha(h(\xi)) =0
\end{equation}
By substituting the solution of the first condition in \eqref{eq:KKT_zcbf} with $e^*= \lambda^* L_gh(\xi )^\top$ to the above equation, we get
\begin{equation}
     L_fh(\xi )+L_gh(\xi )\left(\lambda^* L_gh(\xi )^\top+\sbm{\dot{v}_s \\ \omega_s}\right)+\alpha(h(\xi )) =0.
\end{equation}
Accordingly, the Lagrange multiplier $\lambda^*$ can be expressed as
\begin{equation}\label{eq:lambda}
      \lambda^* = -\frac{L_fh(\xi )+L_gh(\xi ) \sbm{\dot{v}_s \\ \omega_s}+\alpha(h(\xi ))}{L_gh(\xi ) L_gh(\xi )^\top},
\end{equation}
where $L_gh(\xi ) L_gh(\xi )^\top$ is scalar.  
Therefore, the optimal $e^*$ and $u^*$ satisfy
\begin{equation}
\begin{aligned}
    e^* &= \lambda^* L_gh(\xi )^\top \\ 
        &= -\frac{L_fh(\xi )+L_gh(\xi ) \sbm{\dot{v}_s \\ \omega_s}+\alpha(h(\xi ))}{L_gh(\xi ) L_gh(\xi )^\top} L_gh(\xi )^\top\\
         &= -\frac{H(\xi)}{\|L_gh(\xi )\|^2} L_gh(\xi )^\top,\\
u^* & = e^* +  \sbm{\dot{v}_s \\ \omega_s} = e^* + u_{\text{ref}}.
\end{aligned} 
\end{equation}
Since we consider the case when $\lambda^* > 0$, it follows from $\eqref{eq:lambda}$ that necessarily 
\begin{equation}
    L_fh(\xi ) + L_gh(\xi ) \sbm{\dot{v}_s \\ \omega_s}+\alpha(h(\xi )) <0.
\end{equation}
In other words, $H(\xi) < 0$. 

As a summary, the closed form of the final pointwise optimal solution $u^*$ and $e^*$ are given by 
\begin{equation}\label{eq:opt_u*}
u^*(\xi) = \left\{
\begin{array}{ll}
 \sbm{\dot{v}_s \\ \omega_s}  - \frac{L_gh(\xi)^\top}{\left \| L_gh(\xi) \right \| ^2}H(\xi) & \text{if } H(\xi)<0 \text{ and}, \\
 & \left \| L_gh(\xi) \right \|\neq 0; \\
 \sbm{\dot{v}_s \\ \omega_s} & \text{otherwise} 
\end{array}
\right.
\end{equation}
\begin{equation}
 e^*(\xi ) = \left\{
\begin{array}{ll}
  - \frac{L_gh(\xi)^\top}{\left \| L_gh(\xi) \right \| ^2}H(\xi) & \text{if } H(\xi)<0 \text{ and}, \\
  & \left \| L_gh(\xi) \right \|\neq 0; \\
  \sbm{0\\0} & \text{otherwise} 
\end{array}
\right.
\end{equation}
where $H(\xi )$ is defined in \eqref{eq:H}.

Let us define the following Lipschitz continuous functions  
\begin{align}\label{eq:omega_1}
    \ell_1(r)&=\left\{\begin{array}{lr}
0 & \forall r\geq0\\ 
r & \forall r<0
\end{array}\right.
\\ \label{eq:omega_2}
   \ell_2(\xi )&= H (\xi ) 
\\ \label{eq:omega_3}
    \ell_3(\xi ) &= - \frac{L_gh(\xi)^\top}{\left \| L_gh(\xi) \right \| ^2},
\end{align}
where $H$ is given in \eqref{eq:H}. 
Since the reference control signal $ u_{\text{ref}}=\sbm{\dot{v}_s \\ \omega_s}$ obtained from source seeking algorithm is Lipschitz continuous and since the $f(\xi)$ and $g(\xi)$ of \eqref{eq:dynamics} are Lipschitz continuous, as well as the derivative of $h(\xi)$ and $\alpha(h(\xi))$, we now have both $L_fh(\xi)$ and $L_gh(\xi)$  Lipschitz continuous in the set {$\mathcal{X}_{s,\text{ext}}$}. Using \eqref{eq:omega_1}--\eqref{eq:omega_3}, we define 
\begin{equation} \label{w}
    e^*(\xi ) = \ell_1(\ell_2(\xi ))\ell_3(\xi ) \qquad  \forall\xi  \in{ \mathcal{X}_{s,\text{ext}}}. 
\end{equation}
Since the function $\ell_3(\xi )$ is  locally Lipschitz continuous in { $\mathcal{X}_{s,\text{ext}}$} (with $\left \| L_gh(\xi) \right \|\neq 0$), the final unique solution $e^*(\xi )$ (or $u^*(\xi ) = e^*(\xi )+u_{\text{ref}}(\xi )$) is locally Lipschitz continuous in $\mathcal{X}_{s,\text{ext}}$. This proves the claim of property {\bf P1}.
 
Let us now prove the property {\bf P2} where we need to show the forward invariant property of the safe set $\mathcal{X}_\text{s,ext}$. Using the given ZCBF $h(\xi)$, we define 
\begin{equation}
    K_{zcbf}(\xi)=\left \{ u \in \mathcal{U}:L_fh(\xi)+L_gh(\xi) u + \alpha (h(\xi) ) \geq  0  \right \}
\end{equation}
for all $\xi \in { \mathcal{X}_{s,\text{ext}}}$. As established that the optimal solution $u^*$ is locally Lipschitz in $\mathcal{X}_{s,\text{ext}}$, the resulting Lipschitz continuous control $u^*(\xi ):{ \mathcal{X}_{s,\text{ext}}}\rightarrow \mathcal{U}$ satisfies $u^*(\xi) \in K_{zcbf}(\xi)$,  renders the safe set $\mathcal{X}_{s,\text{ext}}$ forward invariant \cite[Corollary 2]{Ames_TAC}. In other words, for the system \eqref{eq:dynamics}, if the initial state $\xi(0) \in \mathcal{X}_{\text{s,ext}}$ then $\xi(t)  \in \mathcal{X}_{\text{s,ext}}$ for all $t\geq 0$, i.e. the robot trajectory $\xi $ will remain in the safe set $\mathcal{X}_{\text{s,ext}}$ for all time $t\geq 0$. 

Finally, we proceed with the proof of the property {\bf P3} where \eqref{eq:convergence-0} holds, and we establish this property by showing that the position of the robot will converge only to the source location $(x^*, y^*)$. Recall the signal distribution in the obstacle-occupied environment (in Assumption \ref{ass:J}), consider the case that CBF is not active ($H(\xi)\geq 0$), then the robot is steered by the reference controller and the convergence proof in our work \cite[Proposition III.1]{Li} is applicable. It is established that for positive gains $k_1, k_2$ and for the twice-differentiable, radially unbounded strictly concave function $J$ with maxima at $(x^*, y^*)$, the gradient-ascent controller $u_s=\sbm{v_s\\ \omega_s}$ as in \eqref{eq:SS} {(which corresponds to the reference control signal $u_{\text{ref}} =\sbm{\dot{v}_s \\ \omega_s}$ in \eqref{eq:QP-zcbf})}  guarantees the boundedness and convergence of the closed-loop system state trajectory to the optimal maxima (source) for any initial conditions. 

Subsequently, considering that CBF is active and the robot is driven by the optimal input $u^*$ with respect to the safety constraint, we will show that the mobile robot will not be stationary at any point in $\mathcal X_{s,\text{ext}}$ except at $(x^*, y^*)$. As defined in \eqref{eq:zcbf}, the stationary points belong to the set $\{\xi | \dot h(\xi)=0{\}}$. Indeed, when $\dot h = 0$, $h$ is constant at all time, which implies that either it is stationary at a point $(x,y)$ or it moves along the equipotential lines (isolines) with respect to the closest obstacle $\mathcal O_i$. We will prove that both cases will not be invariant except at $(x^*,y^*)$. 
By substituting the optimal input $u^*(\xi)$ as \eqref{eq:opt_u*} into $\dot{h}(\xi)=L_fh(\xi)+L_gh(\xi)  u^*(\xi)$, we have that
 \begin{equation}\label{eq:dot_h_proof}
 \setlength{\abovedisplayskip}{3pt}
\setlength{\belowdisplayskip}{3pt}
 \dot{h}(\xi )= \left\{
\begin{array}{ll}
-\alpha(h(\xi)) & \text{if } H(\xi)<0 \text{ and}, \\
 & \left \| L_gh(\xi) \right \|\neq 0;  \\
 L_fh(\xi)+L_gh(\xi)  \sbm{\dot{v}_s \\ \omega_s} & \text{otherwise,} 
\end{array}
\right.
\end{equation}
where $H(\xi )$ is defined in \eqref{eq:H}. Given the forward invariance of the safety set $\mathcal X_{s,\text{ext}}$ as proved in {\bf P2}, the trajectory will never touch the open boundary $\partial\mathcal X_{s,\text{ext}}$ where $D=0$ (i.e. $h(\xi)=0$), if $\xi(0)\in \mathcal{X}_{s,\text{ext}}$. Then, in the case of active CBF (i.e., $h(\xi)>0, H(\xi)<0$ and $\left \| L_gh(\xi) \right \|\neq 0$), it is clear that the robot will not remain in $\{\xi | \dot h(\xi)=0\}$, as $\dot h(\xi) = -\alpha(h(\xi)) =0$ iff $h(\xi)=0$ and $\alpha$ is an extended class $\mathcal K$ function. 

Consider the other case that $h(\xi)>0$ and $H(\xi)\geq 0$, $\dot h$ will not be equal to zero at all time as $\xi$ is driven by the source-seeking law in a gradient field with only one unique maximum. This concludes the proof that the robot does not remain stationary except at the source location $(x^*, y^*)$. 
\end{proof}

Alternatively, we propose another collision avoidance approach where the unicycle dynamics \eqref{eq:unicycle_model} is considered as a time-varying affine nonlinear system as follows
\begin{equation} \label{eq:unicycle dynamics}
\setlength{\abovedisplayskip}{5pt}
\setlength{\belowdisplayskip}{5pt}
\dot{\xi}= 
\underbrace{\bbm{
\cos(\xi_3)v(t) \\\sin(\xi_3)v(t)\\ 0}}_{f(\xi,t )}+\underbrace{\bbm{
0 \\ 0\\1 }}_{g(\xi )}\underbrace{\omega}_{u}
\end{equation}
where the angular velocity $\omega$ will be used to avoid the collision with the obstacles and the longitudinal velocity $v=k_1 \left\langle \vec{o}(\theta),\nabla J(x,y) \right\rangle$ is controlled by the source-seeking as in \eqref{eq:SS}. Given the safe set $\mathcal{X}_s$ in \eqref{eq:safeset}, a reciprocal control barrier function can be constructed as
$  B(\xi) = \frac{1}{D(x,y)e^{P(x,y,\theta)}}  $
with the distance $ D(x,y)$ and orientation function 
 $P(x,y,\theta) = (\theta - \beta)\delta$,
where $\beta= \text{arctan}\frac{y_{\text{obs},i}-y}{x_{\text{obs},i}-x}$ is the bearing angle between the robot and the closest obstacle's boundary, and the parameter $\delta\in\mathbb{R}_+$ is a control parameter. The minimum safe distance $d_{\text{safe}}$ is in the function $D(x,y)$. Following \cite[Corollary 1]{Ames_TAC}, given the RCBF $B(\xi)$, for all  $\xi  \in \text{Int}(\mathcal{X}_s)$ satisfying \eqref{eq:safeset0}-\eqref{eq:safeset2}, the forward invariance of $\mathcal X_s$ is guaranteed if the locally Lipschitz continuous controller $u$ satisfies
   $ L_fB(\xi ) +L_gB(\xi )u-\alpha_3(h(\xi )) \leq 0$
where $\alpha_3$ is a class $\mathcal{K}$ function, and $h(\xi) = \frac{1}{B(\xi)}$. Then the RCBF-based safety-guaranteed source-seeking problem can be formulated as a quadratic programming (QP) below
\begin{align}\label{eq:QP-rcbf}
    &u^* =\mathop{\mathrm{argmin}}\limits_{u \in \mathcal{U}} \left \| u - \omega_s \right \|^2 \\
\nonumber     \text{s.t.}  \quad & L_fB(\xi )+L_gB(\xi )u-\alpha_3(h(\xi ))\leq 0 
\end{align}
Note that the unicycle robot's motion variables (i.e. velocity $v$ and angular velocity $\omega$) are controlled independently, the sole use of angular velocity for collision avoidance will no longer pose a mixed relative degree problem. Besides, as the linear velocity $v$ is controlled for source-seeking, this approach does not limit the control performance as the general methods where $v$ is fixed \cite{xiao2021high}.

As a remark, considering the practicality and implementation, an additional control input constraint can be introduced to the above CBF-QP architecture such that the control signals are bounded within an admissible set $\mathcal{U}_{\text{adm}} =\left \{ {u}=\sbm{{u}_{1} \\ {u}_{2}}\in\mathbb{R}^2 \mid a_{\text{min}}\leq {u}_{1} \leq a_{\text{max}},  \omega_{\text{min}}\leq {u}_{2} \leq \omega_{\text{max}}\right \}$. More details can be found in our follow-up work \cite{Li_connectivity}. 

\vspace{-0.1cm}
\section{Simulation setup and results} \label{sec:simulation}
In this section, we validate the proposed method by using numerical simulations in Matlab and in the Gazebo/ROS environment. Firstly, Monte Carlo simulations are conducted in a Matlab environment to validate the methods where static obstacles are considered. Subsequently, we evaluate the efficacy and generality of the approach when it has to deal with a realistic environment with dynamic obstacles by running simulations in Gazebo/ROS where multiple walking people are considered. In all Matlab simulations, the stationary source location $(x^*,y^*)$ is set at the origin, while in the Gazebo/ROS simulations, the source is randomly positioned. 

\vspace{-0.2cm}
\subsection{Simulation Results in Matlab}
 For the collision avoidance in the static environment, we consider multiple circular-shaped obstacles with different radii $r\in\{0.7\text{m}, 0.8\text{m}, 0.9\text{m}, 1.0\text{m}, 1.2\text{m}\}$ that are set randomly around the source, and the minimum distance between obstacles is $d_{\text{min}} = 0.8\text{m}$ in this environment. The extended safe set on the extended state space $\Xi$ is given by \eqref{eq:safeset_extended} where $\mathcal{O}_{i}$ denotes the boundary of the $i_{th}$ obstacle. The robot is required to maintain a safe margin $d_{\text{safe}} = 0.1\text{m}$ to the obstacles' boundary. A simple quadratic concave field is distributed as $J(x,y)= -\sbm{x&y}H\sbm{x\\y}$ where $H=H^T>0$. It has a unique maximum at the origin and its local gradient vector is given by $\nabla J(x,y) = - 2\bbm{x&y}H$. The source gradient and distance to the obstacles are observed by the robot in real-time for the computation of ZCBF-based 
 control input.
 
As the first control input in \eqref{eq:dynamics} refers to the longitudinal acceleration, instead of the usual longitudinal velocity in the unicycle model, we use the following Euler approximation to get the longitudinal velocity input from the computed longitudinal acceleration input $a(t)$ in the discrete-time numerical simulation as $ v_{s}(t_{k+1}) = v_{s}(t_{k}) + d_t a(t_k) $ where $t_k$ and $t_{k+1}$ denote the current and next discrete-time step, respectively, and $d_t$ is the integration time step. 

In the first simulation, we consider the ZCBF in \eqref{eq:zcbf} with an example $D(x,y)$ given as below
\begin{equation}\label{eq:D_examp}
  D(x,y) = \left\{
\begin{array}{ll}
e^{-\frac{1}{\gamma d_{\text{cons}}}} - e^{\frac{1}{\gamma(d_{\text{ro}}(x,y)-d_{\text{cons}})}}, \quad & d_{\text{ro}}(x,y)< d_{\text{cons}}\\ 
e^{-\frac{1}{\gamma d_{\text{cons}}}}, \quad & d_{\text{ro}}(x,y) \geq d_{\text{cons}}
\end{array}\right.
\end{equation}
where $\gamma$ is a positive multiplier.
Figure \ref{mat:zcbf} shows two simulation results using the same $[0,10]\times [0,10]$ environment with $9$ circular obstacles.
The color gradient shows the source field where Sub-figure \ref{fig:b} is based on $H=I$ and \ref{fig:c} is based on $H=\sbm{5 & 4\\4 & 5}$. The robot is initialized in random initial conditions and the parameters of the reference source seeking input control are set as $k_1=1, k_2=5$ in sub-figure \ref{fig:b}, and $k_1= 0.5, k_2= 5$ for another case. 
These results show that the unicycle can maneuver around the obstacles of different dimensions, guarantee a safe margin $d_{\text{safe}} = 0.1\text{m}$ and converge to the maximum point $(0,0)$, as expected. Particularly, Figure \ref{fig:state-time} shows the evolution of robot with the initial states as in Figure \ref{fig:b}. 
\begin{figure}[htbp]
	\centering
\vspace{-15pt}
\subfigcapskip=-5pt 
	\subfigure[]{
		\begin{minipage}[t]{0.22\textwidth}
			\centering			
   \includegraphics[width=1\textwidth]{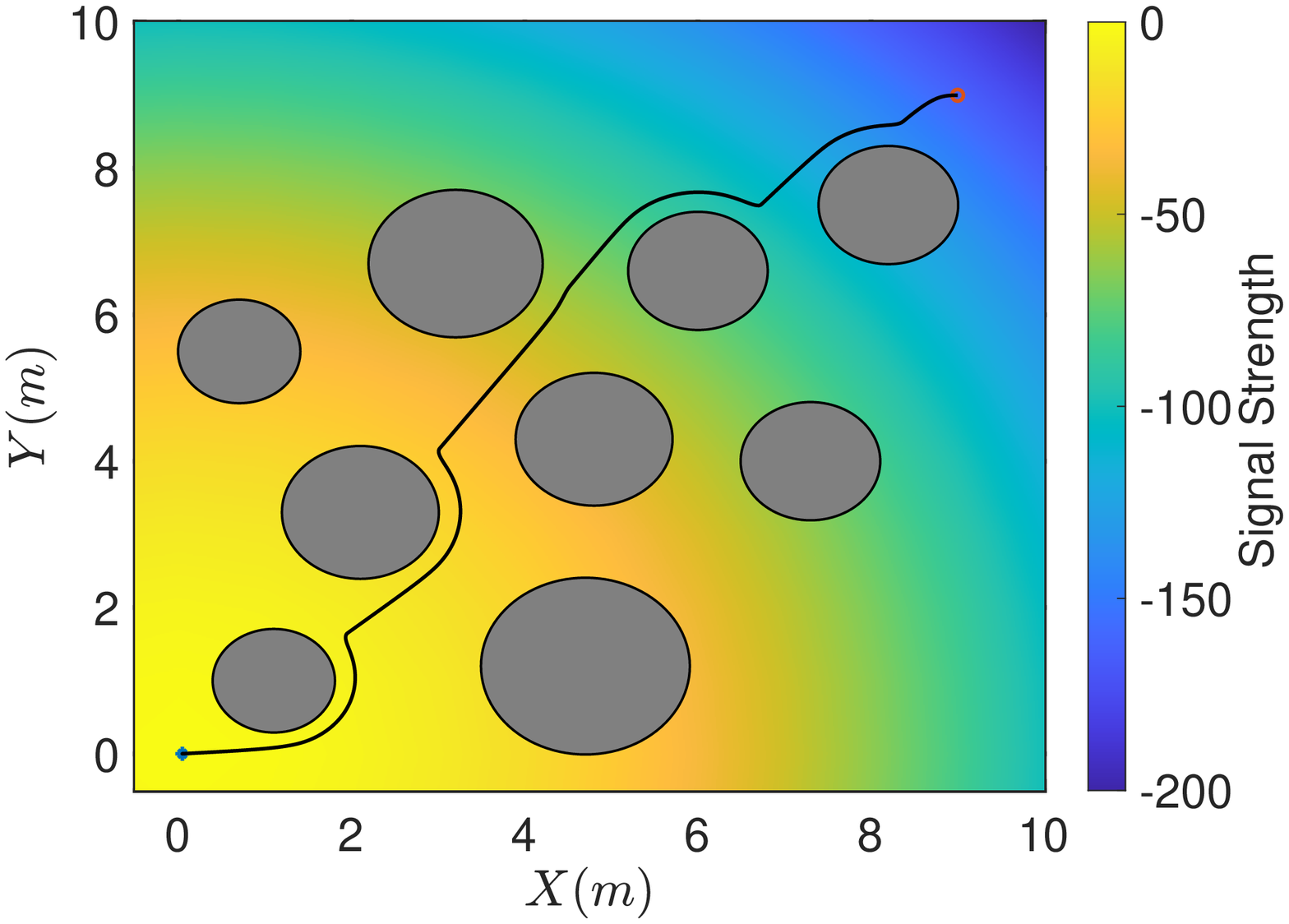}
			\label{fig:b}
		\end{minipage}%
	}%
        \subfigure[ ]{
		\begin{minipage}[t]{0.22\textwidth}
			\centering			\includegraphics[width=1\textwidth]{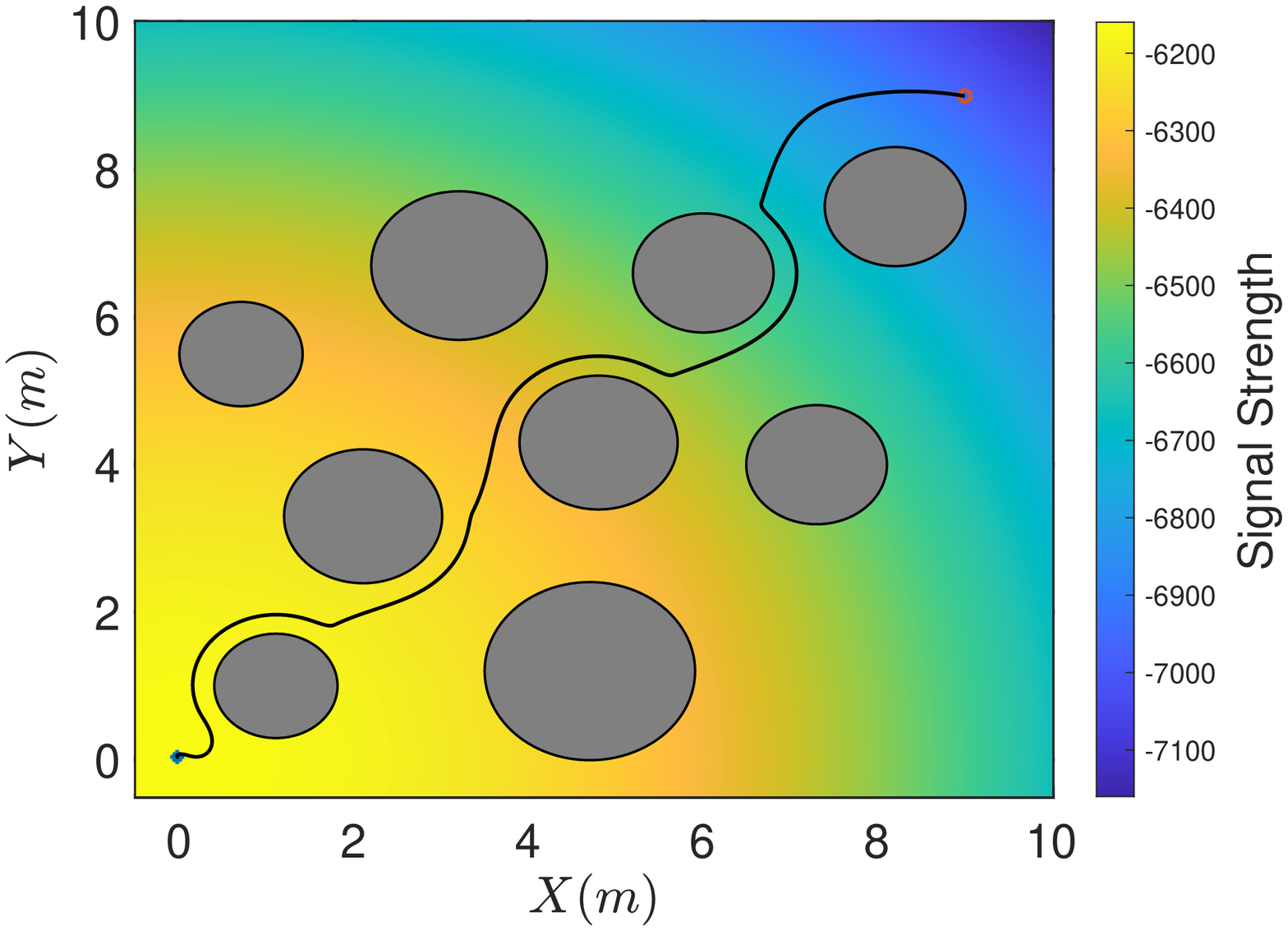}
			\label{fig:c}
		\end{minipage}%
	}%

	\centering
	\caption{Simulation results based on the zeroing control barrier function $ h(x,y,v,\dot{x},\dot{y}) = D(x,y)  e^{-P(x,y,v,\dot{x},\dot{y})}$, where both the longitudinal and angular velocity are given by the optimal solution of ZCBF-QP \eqref{eq:QP-zcbf}. The source is set at the origin $(0,0)$, surrounded by multiple circular-shape obstacles, and the signal strength is distributed in the field: $(a):J_1(x,y)=-x^2-y^2$; $(b):J_2(x,y)= -5x^2-8xy-5y^2$, respectively. The robot is set at initial position (given by $\circ$) to search the source while avoiding any potential collisions.}
	\label{mat:zcbf}
\end{figure}
\begin{figure}[htbp]
	\centering
\vspace{-20pt}
    \subfigure[]{
		\begin{minipage}[t]{0.22\textwidth}
			\centering			
                \includegraphics[width=1\textwidth]{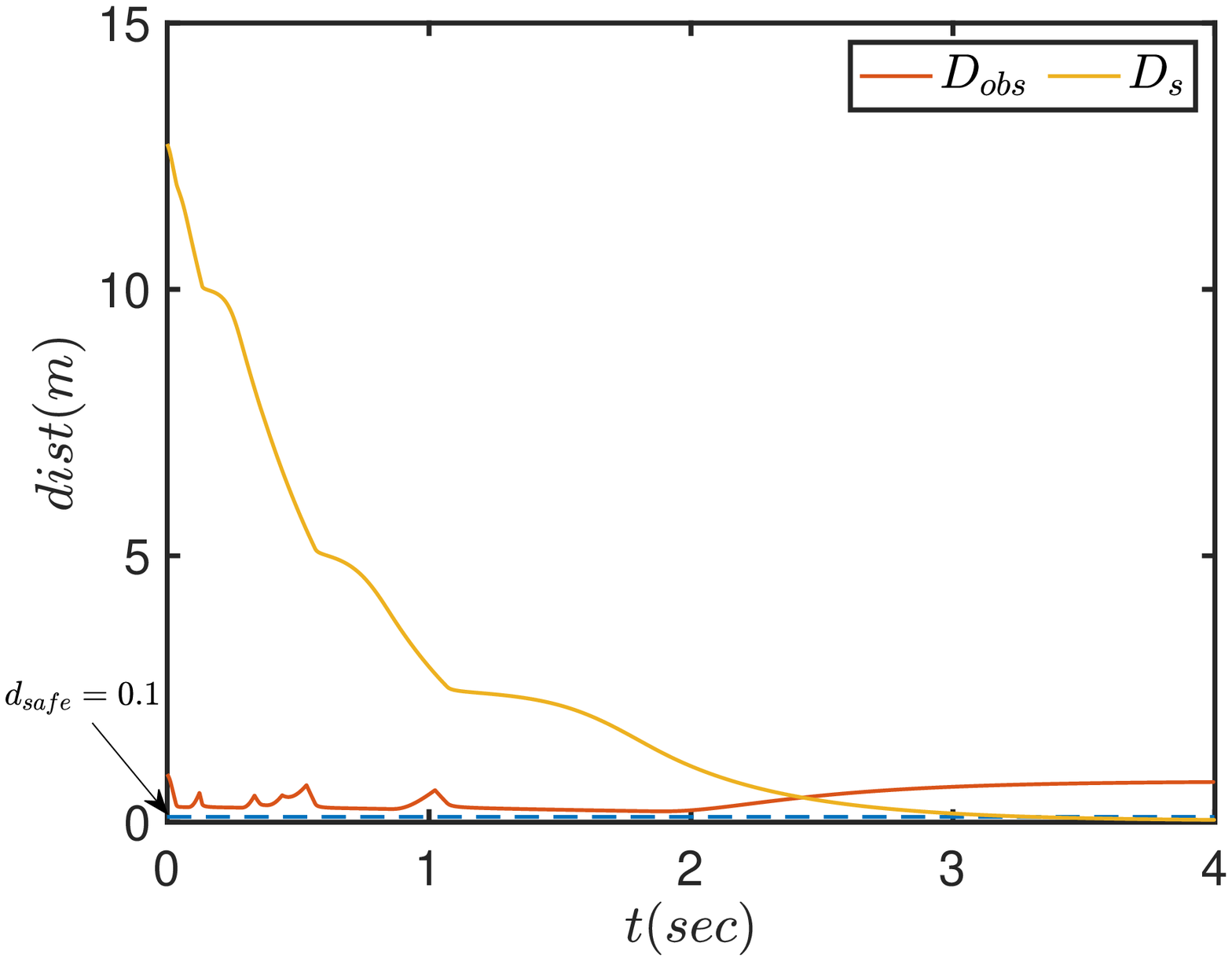}
		\end{minipage}%
	}%
	\subfigure[]{
		\begin{minipage}[t]{0.22\textwidth}
			\centering			
                \includegraphics[width=1\textwidth]{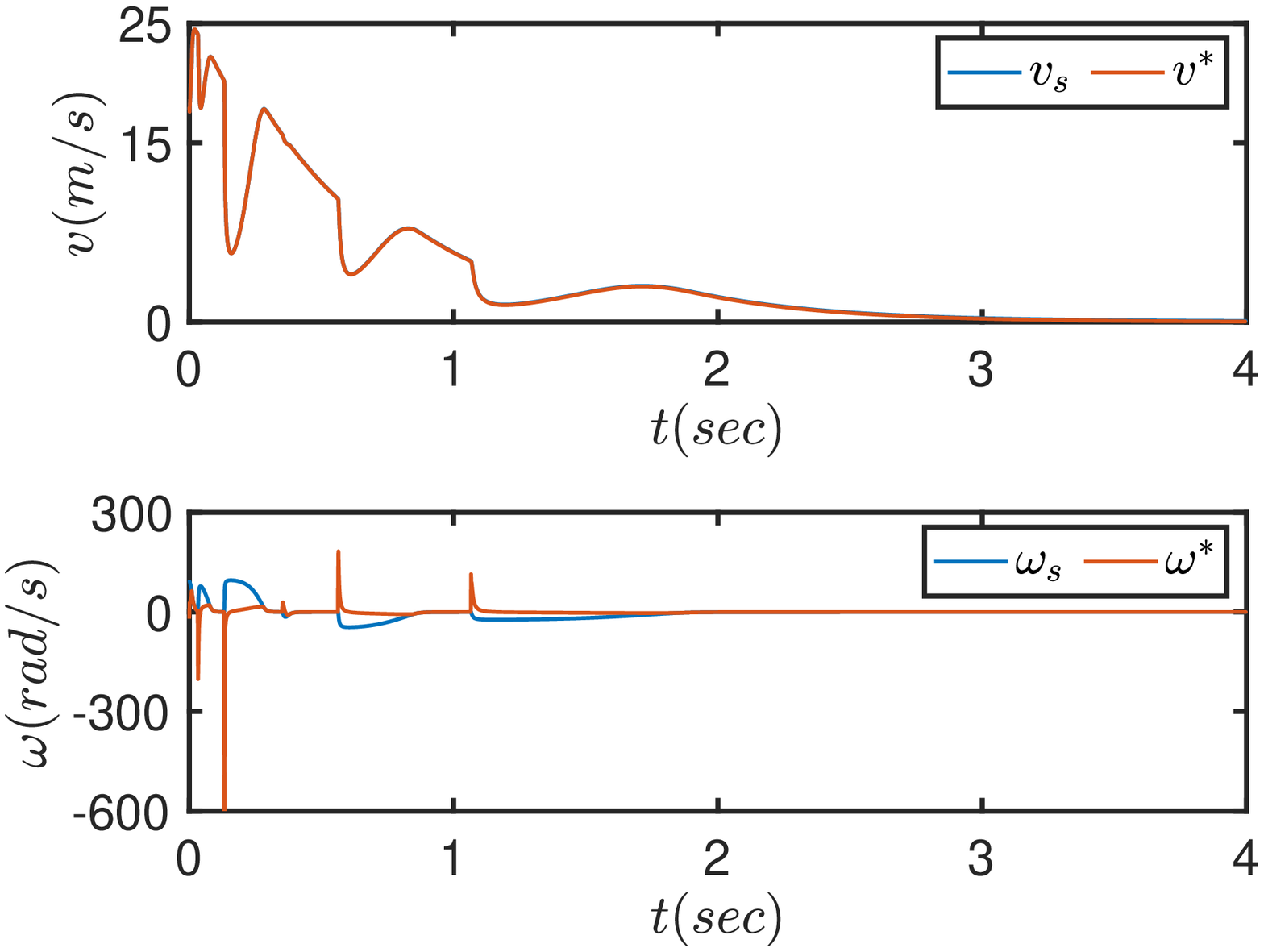}
		\end{minipage}%
	}%
	
	\centering
	\caption{(a). The distance of the robot to the closest obstacle $D_{\text{obs}}= \left\|\sbm{x_{\text{obs}} \\ y_{\text{obs}}}-\sbm{x\\y}\right\| $ and to the source $D_s = \left\|\sbm{x^* \\ y^*}-\sbm{x\\y}\right\|$; (b) The comparison of motion variables where $v_s, \omega_s$ are the nominal source-seeking signals and $v^*, \omega^*$ are the optimized safe control inputs for obstacle avoidance. }
	\label{fig:state-time}
\end{figure}

As a supplementary, we compare the performance of the proposed ZCBF-based collision-free source-seeking with two other ECBF/RCBF-based methods (the details can be found in our supplementary work \cite{Li_CBF}), by using the same reference source-seeking control parameters. In this simulation, instead of the smooth $D(x,y)$ as in \eqref{eq:D_examp}, we deploy a standard distance function $D(x,y) = d_{\text{ro}} $
for verifying the efficacy of our approach in general implementation.
For each method, we run a Monte Carlo simulation of $50$ runs, where on each run, the three methods start from the same randomized initial conditions. The resulting trajectories of the closed-loop systems are recorded and analyzed for comparing the methods. In Figure \ref{fig:comparison}, we present the box plot\footnote{The box plot provides a summary of a given dataset containing the sample median, the $1^{\text{st}}$ and $3^{\text{rd}}$ quartiles, $1.5$ interquantile range from the $1^{\text{st}}$ and $3^{\text{rd}}$ quartiles and the outliers.} drawn from the samples of convergence time and minimum distance to the boundary of the obstacle.  The variable $T_c$ in Figure \ref{fig:comparison}(a) refers to the convergence time for the robot to reach the final $20\%$ of the distance between the source and its initial position. The variable $D_{\text{obs}}$ in Figure \ref{fig:comparison}(b) represents the closest distance between the robot and the obstacle's boundary during the maneuver toward the source point. 

The Monte Carlo simulation results validate the convergence analysis of the proposed collision-free source-seeking control algorithms where the robot remains safe for all time and maintains a minimum safety margin. It is noted that both linear and angular velocities are imposed with safety constraints in ZCBF for collision avoidance, while only the angular velocity is constrained in RCBF and ECBF. In this case, while the methods perform equally well in avoiding the obstacles, the ZCBF-based method outperforms the other two approaches in both the convergence time as well as in ensuring that the safe margin of $d_{\text{safe}} = 0.1m$ from the obstacles' boundary does not trespass. We remark that the ECBF-based and RCBF-based methods may still enter the safe margin briefly due to the discrete-time implementation of the algorithms. This robustness of the algorithms with respect to the disturbance introduced by time-discretization shows a property of input-to-state safety of the closed-loop systems as studied recently in \cite{Romdlony2016b,Romdlony2019}.
\begin{figure}[htbp]
	\centering
\vspace{-10pt} 
\subfigtopskip=0pt 
\subfigbottomskip=0pt 
\subfigcapskip=0pt 
    \subfigure[]{
		\begin{minipage}[t]{0.22\textwidth}
		    \label{fig:ts}
			\centering			\includegraphics[width=1\textwidth]{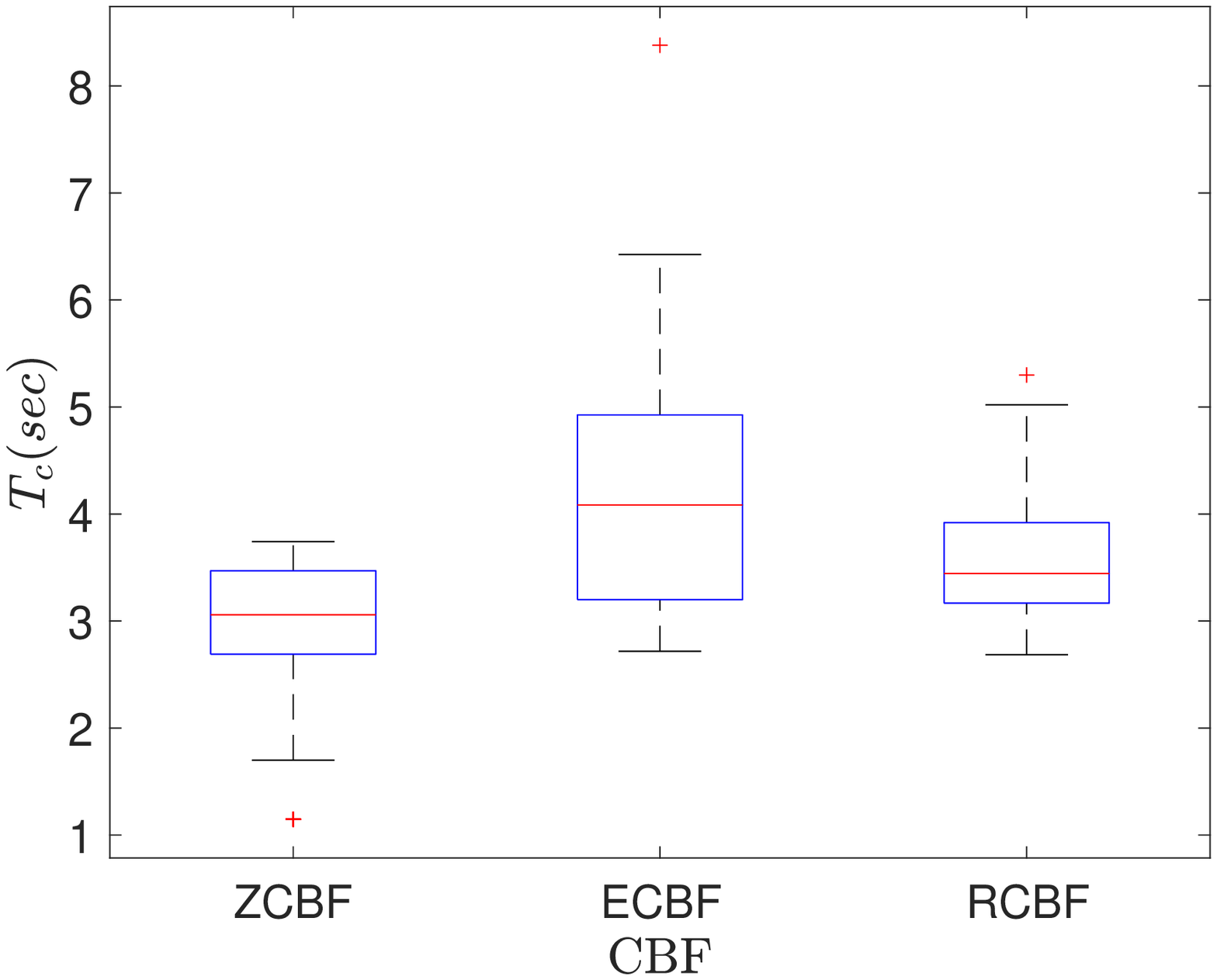}
		\end{minipage}%
	}%
	\subfigure[]{
		\begin{minipage}[t]{0.22\textwidth}
		 \label{fig:Dc}
			\centering			\includegraphics[width=1\textwidth]{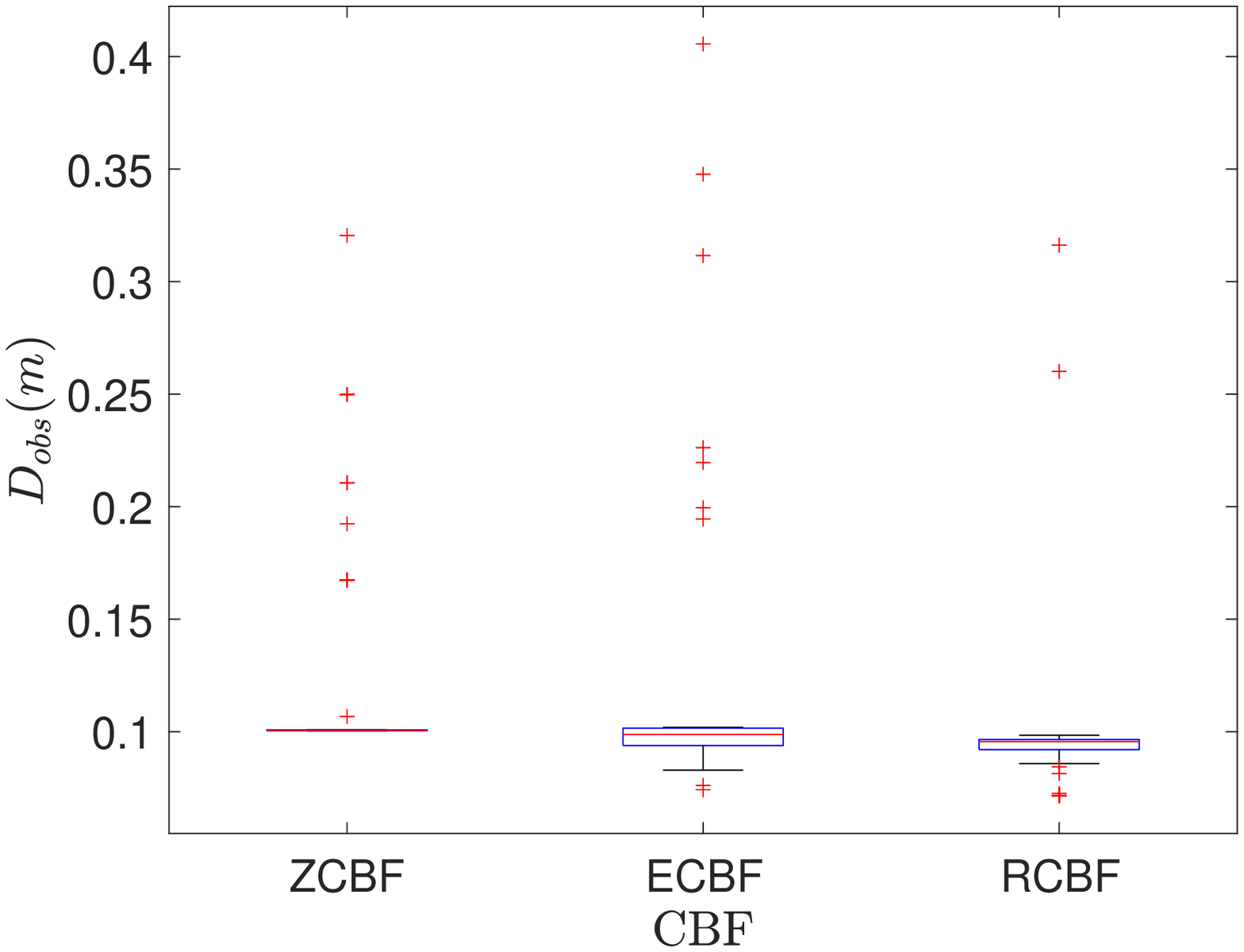}
		\end{minipage}%
	}%

	\centering
     \caption{Box plot of the Monte Carlo simulation of the three control barrier function-based methods using the same reference source seeking control parameters ($k_1= 0.3, k_2 = 30$). The robot is initialized at $50$ random initial positions for each group of simulations. }
	\label{fig:comparison}
\end{figure}

\vspace{-0.5cm}
\subsection{Simulation Results in Gazebo/ROS}
The realistic Gazebo/ROS (Noetic) environment is built and run in Linux (Ubuntu 20.04). Figure \ref{fig:gazebo_world} shows the common 3D room setup where three people walk at different speeds of $0.2 \text{m/s}, 0.4 \text{m/s}, 1 \text{m/s}$, and they are considered as dynamic obstacles by the mobile robot placed at the center of the room. The source field is given by a quadratic field which is illustrated as colored rings and the center for the source field is set randomly in every simulation. We note that the source distribution model is not known apriori in practical applications, and the gradient $\nabla J(x,y)$ can be estimated by local sensor systems.

The unicycle robot is simulated by the turtlebot3 (Waffle) embedded with a laser distance sensor of LDS-01 for detecting all obstacles within the neighborhood of $0.12\sim3.5\text{m}$. Accordingly, the bearing angle to the closest obstacle can be calculated from the $720$ samples of $360\degree$ field-of-view. Considering the dynamic environment due to the movement of pedestrians, the minimum safe distance between the robot and any potential collisions (walls and pedestrians) is set to be $d_{\text{safe}}=0.3\text{m}$ in the safe set $\mathcal{X}_{s,\text{ext}}$ in \eqref{eq:safeset_extended}. We emphasize that only local measurements (e.g., local signal gradient, distance, and bearing angle between the robot and obstacles) are required in the implementation. In particular, the global information, such as the source field function, the location of the source, or the obstacles) is not relayed to the robot.

Figure \ref{fig:gazebo_zcbf} shows the plots of the initial and final states of the robot for the safe source-seeking motion. Given the analysis regarding the static obstacles in Section \ref{sec:control_design}, the robot can safely navigate within the fixed maze and successfully reach the source. Particularly, with high-frequency measuring and computing updates, the proposed method can provide safe performance regarding dynamic obstacles (walking pedestrians). The animated motion of the robot in Gazebo/ROS performing the source seeking while avoiding these dynamic obstacles can be found in the accompanying video at.\footnote{[Online]. Available:
\href{https://youtu.be/D5zXVeOPy30}{https://youtu.be/D5zXVeOPy30} } 
\begin{figure}[htbp]
\centering
\vspace{-0.2cm}
\subfigtopskip=0pt 
\subfigbottomskip=0pt 
\subfigcapskip=-1pt 
\subfigure[]{
    \begin{minipage}[t]{0.22\textwidth}
        \centering			\includegraphics[width=1\textwidth]{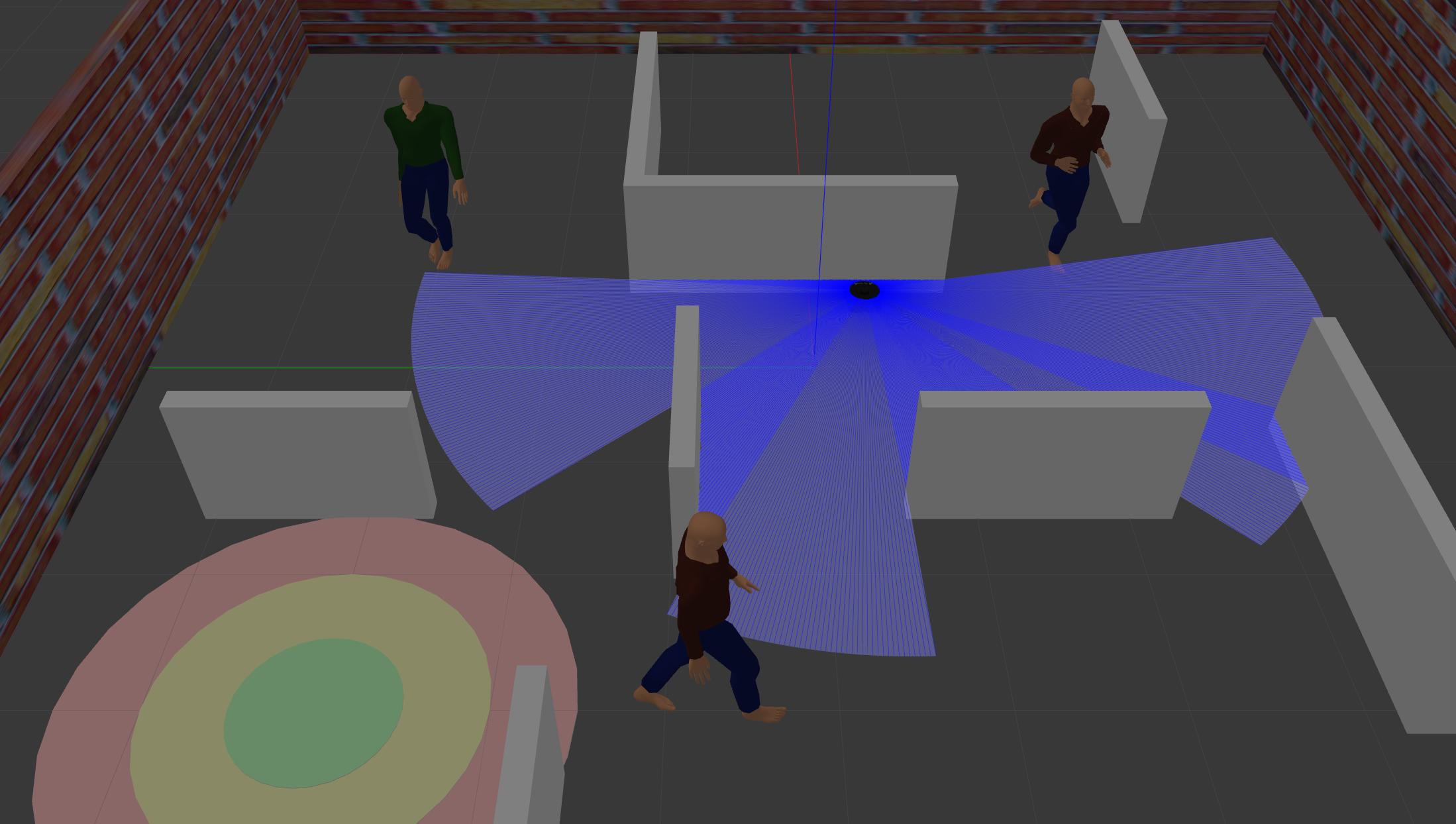}
    \end{minipage}%
}%
\subfigure[]{
    \begin{minipage}[t]{0.22\textwidth}
        \centering			\includegraphics[width=1\textwidth]{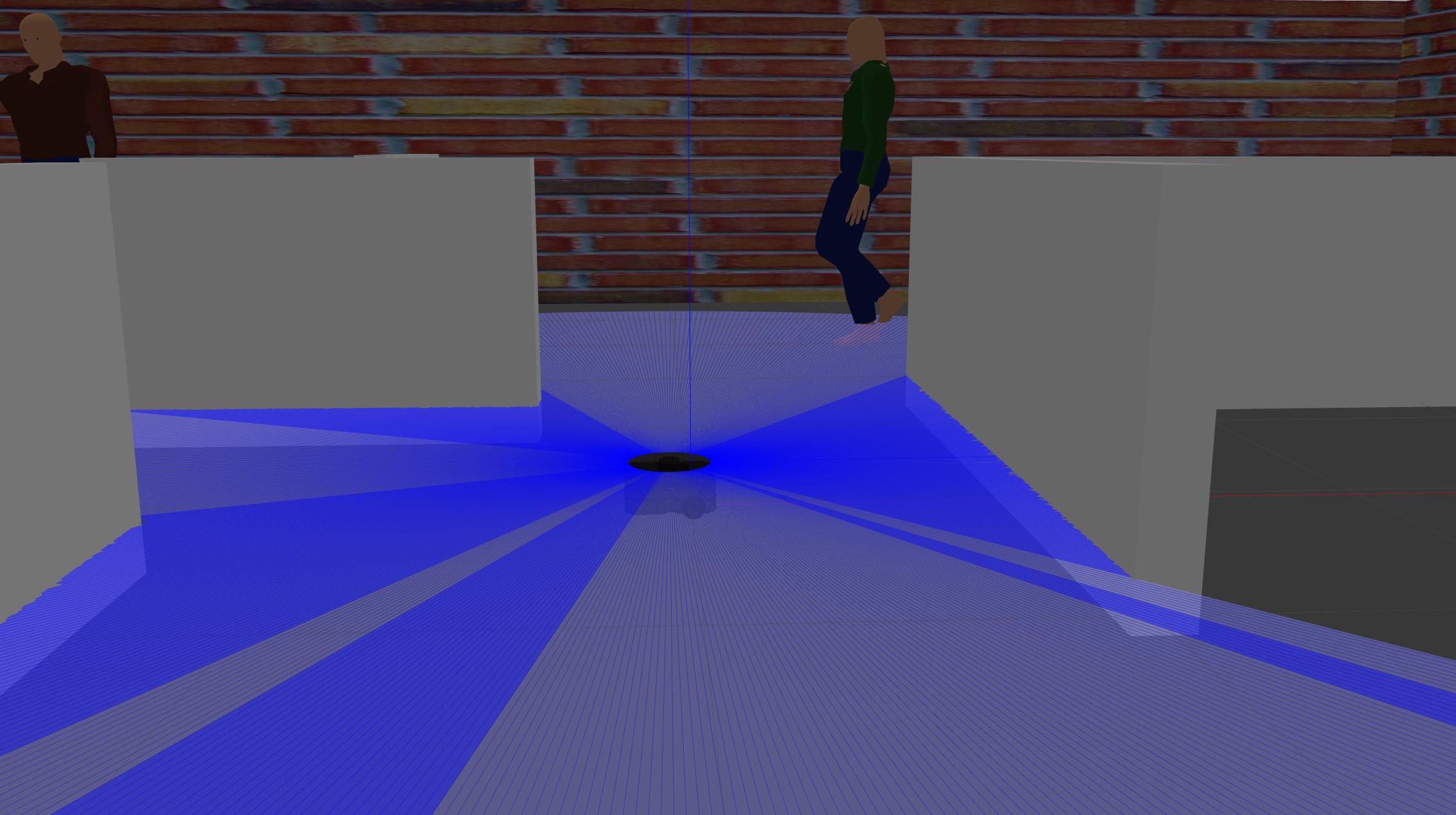}
    \end{minipage}%
}%

\centering
\caption{Laser-embedded turtlebot3 robot in the constructed gazebo world, which contains fixed walls and walking people as potential collisions for robot navigation. The center point of the green circle area stands for the source location.  }
\label{fig:gazebo_world}
\end{figure}

\vspace{-0.2cm}
\begin{figure}[htbp]
\centering
\vspace{-10pt}
\subfigtopskip=0pt 
\subfigbottomskip=0pt 
\subfigcapskip=2pt 
\subfigure[]{
    \begin{minipage}[t]{0.20\textwidth}
        \label{fig:zcbf_left}
        \centering			\includegraphics[width=1\textwidth]{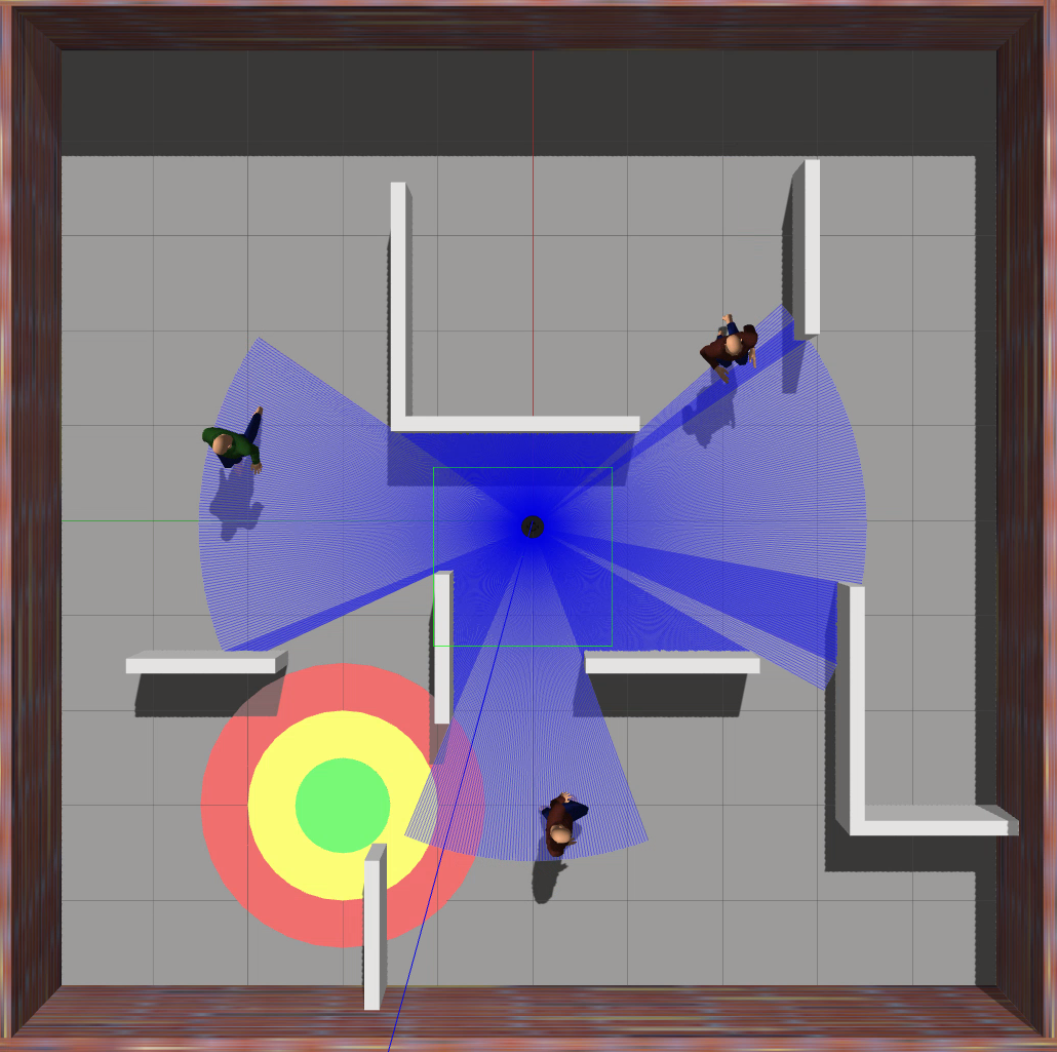}
    \end{minipage}%
}%
\subfigure[]{
    \begin{minipage}[t]{0.20\textwidth}
     \label{fig:zcbf_right}
        \centering			\includegraphics[width=1\textwidth]{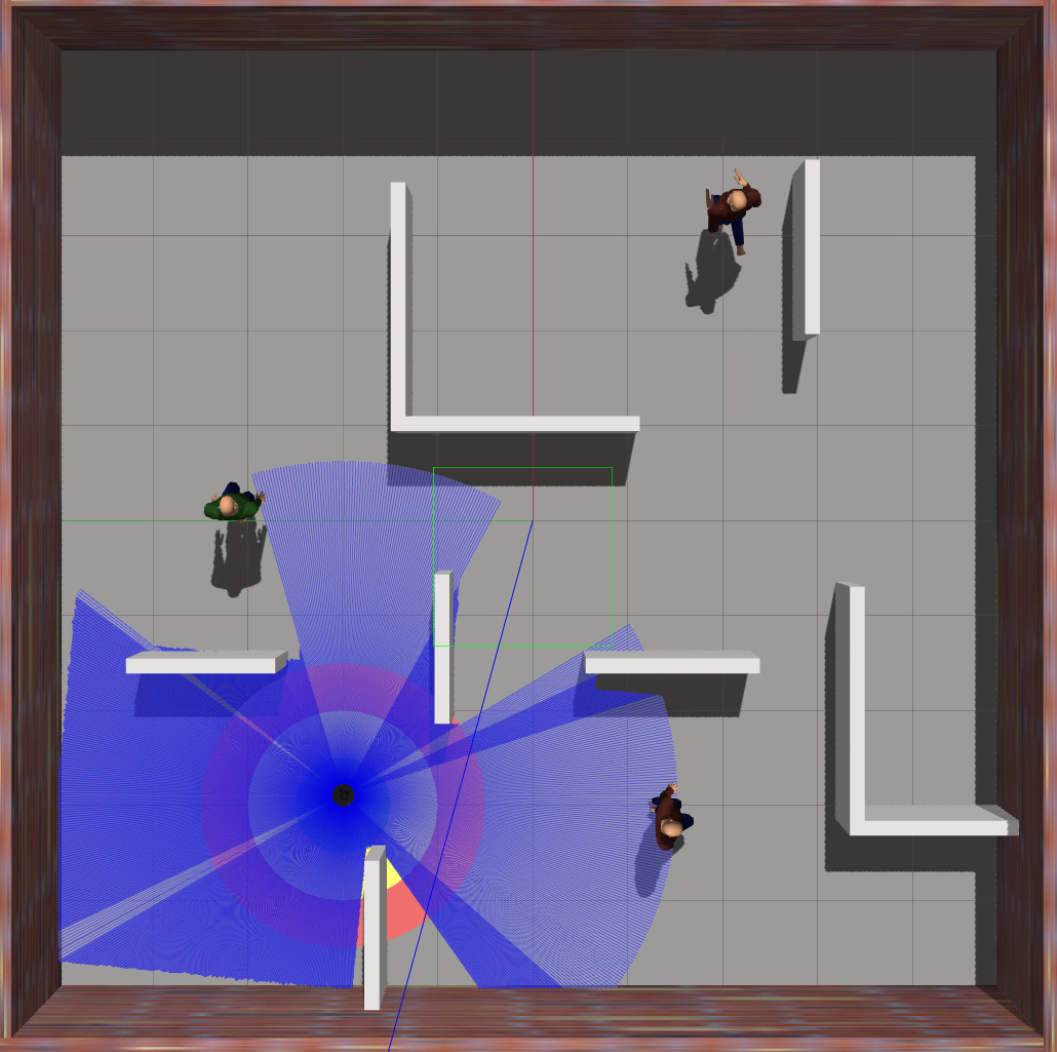}
    \end{minipage}%
}%

\centering
\caption{Simulation of the zeroing control barrier function (ZCBF)-based collision-free source seeking in the unknown environment. Three pedestrians are walking around as dynamic obstacles. Robot is initialized with the state $(0, 0, \frac{1}{2}, \frac{\sqrt{2}}{4}, \frac{\sqrt{2}}{4} )$ in \ref{fig:zcbf_left}, and eventually arrived at the source $(-3,2)$ in \ref{fig:zcbf_right}. The source seeking control parameters in the field $J(x,y) = -(x+3)^2-(y-2)^2$ are set as $k_1=0.2, k_2=0.5$.  }
\label{fig:gazebo_zcbf}
\end{figure}
\vspace{-0.5cm}

\section{Conclusion}\label{sec:conclusion}
In this paper, we presented a framework of the safety-guaranteed autonomous source seeking control for a nonholonomic unicycle robot in an unknown cluttered environment. A construction of the zeroing control barrier function is proposed to tackle the mixed relative degree problem due to the unicycle's nonholonomic constraint. Guided by the projected gradient-ascent control law as the reference control signal, we show that the proposed control barrier function-based QP method gives locally Lipschitz optimal safe source seeking control inputs. The analysis of safe navigation and the convergence to the source are presented. The efficacy of the methods is shown via Monte Carlo simulations in Matlab, as well as, numerical simulations in Gazebo/ROS which provides a realistic environment.
\vspace{-0.1cm}

\vspace{-0.3cm}
\bibliographystyle{IEEEtran}
\bibliography{paper.bib}{}

\begin{thebibliography}{10}
\providecommand{\url}[1]{#1}
\csname url@samestyle\endcsname
\providecommand{\newblock}{\relax}
\providecommand{\bibinfo}[2]{#2}
\providecommand{\BIBentrySTDinterwordspacing}{\spaceskip=0pt\relax}
\providecommand{\BIBentryALTinterwordstretchfactor}{4}
\providecommand{\BIBentryALTinterwordspacing}{\spaceskip=\fontdimen2\font plus
\BIBentryALTinterwordstretchfactor\fontdimen3\font minus
  \fontdimen4\font\relax}
\providecommand{\BIBforeignlanguage}[2]{{%
\expandafter\ifx\csname l@#1\endcsname\relax
\typeout{** WARNING: IEEEtran.bst: No hyphenation pattern has been}%
\typeout{** loaded for the language `#1'. Using the pattern for}%
\typeout{** the default language instead.}%
\else
\language=\csname l@#1\endcsname
\fi
#2}}
\providecommand{\BIBdecl}{\relax}
\BIBdecl

\bibitem{JC2002}
J.~C. Knight, ``Safety critical systems: challenges and directions,'' in
  \emph{Proceedings of the 24th International Conference on Software
  Engineering}, 2002, pp. 547--550.

\bibitem{Hovakimyan2001}
N.~Hovakimyan, C.~Cao, E.~Kharisov, E.~Xargay, and I.~M. Gregory, ``L 1
  adaptive control for safety-critical systems,'' \emph{IEEE Control Systems
  Magazine}, vol.~31, no.~5, pp. 54--104, 2011.

\bibitem{Zou2015}
R.~Zou, V.~Kalivarapu, E.~Winer, J.~Oliver, and S.~Bhattacharya, ``Particle
  swarm optimization-based source seeking,'' \emph{IEEE Transactions on
  Automation Science and Engineering}, vol.~12, no.~3, pp. 865--875, 2015.

\bibitem{Azuma2012}
S.-i. Azuma, M.~S. Sakar, and G.~J. Pappas, ``Stochastic source seeking by
  mobile robots,'' \emph{IEEE Transactions on Automatic Control}, vol.~57,
  no.~9, pp. 2308--2321, 2012.

\bibitem{Prajna}
S.~Prajna, ``Barrier certificates for nonlinear model validation,''
  \emph{Automatica}, vol.~42, no.~1, pp. 117--126, 2006.

\bibitem{Wieland}
P.~Wieland and F.~Allg{\"o}wer, ``Constructive safety using control barrier
  functions,'' \emph{IFAC Proceedings Volumes}, vol.~40, no.~12, pp. 462--467,
  2007.

\bibitem{Xiao}
W.~Xiao, C.~Belta, and C.~G. Cassandras, ``Adaptive control barrier
  functions,'' \emph{IEEE Transactions on Automatic Control}, vol.~67, no.~5,
  pp. 2267--2281, 2021.

\bibitem{Romdlony2016}
M.~Z. Romdlony and B.~Jayawardhana, ``Stabilization with guaranteed safety
  using control lyapunov--barrier function,'' \emph{Automatica}, vol.~66, pp.
  39--47, 2016.

\bibitem{Ames_CDC}
A.~D. Ames, J.~W. Grizzle, and P.~Tabuada, ``Control barrier function based
  quadratic programs with application to adaptive cruise control,'' in
  \emph{53rd IEEE Conference on Decision and Control}.\hskip 1em plus 0.5em
  minus 0.4em\relax IEEE, 2014, pp. 6271--6278.

\bibitem{Ames_TAC}
A.~D. Ames, X.~Xu, J.~W. Grizzle, and P.~Tabuada, ``Control barrier function
  based quadratic programs for safety critical systems,'' \emph{IEEE
  Transactions on Automatic Control}, vol.~62, no.~8, pp. 3861--3876, 2016.

\bibitem{Kong}
H.~Kong, F.~He, X.~Song, W.~N. Hung, and M.~Gu, ``Exponential-condition-based
  barrier certificate generation for safety verification of hybrid systems,''
  in \emph{International Conference on Computer Aided Verification}.\hskip 1em
  plus 0.5em minus 0.4em\relax Springer, 2013, pp. 242--257.

\bibitem{Dai}
L.~Dai, T.~Gan, B.~Xia, and N.~Zhan, ``Barrier certificates revisited,''
  \emph{Journal of Symbolic Computation}, vol.~80, pp. 62--86, 2017.

\bibitem{Ames_ECC}
A.~D. Ames, S.~Coogan, M.~Egerstedt, G.~Notomista, K.~Sreenath, and P.~Tabuada,
  ``Control barrier functions: Theory and applications,'' in \emph{2019 18th
  European control conference (ECC)}.\hskip 1em plus 0.5em minus 0.4em\relax
  IEEE, 2019, pp. 3420--3431.

\bibitem{Glotfelterp}
P.~Glotfelter, J.~Cort{\'e}s, and M.~Egerstedt, ``Nonsmooth barrier functions
  with applications to multi-robot systems,'' \emph{IEEE Control Systems
  Letters}, vol.~1, no.~2, pp. 310--315, 2017.

\bibitem{ref-1}
A.~Scheinker, ``Bounded extremum seeking for angular velocity actuated control
  of nonholonomic unicycle,'' \emph{Optimal Control Applications and Methods},
  vol.~38, no.~4, pp. 575--585, 2017.

\bibitem{ref-2}
T.~Xu, G.~Chen, G.~Zhou, Z.~Liu, Z.~Zhang, and S.~Yuan, ``Fast source seeking
  with obstacle avoidance via extremum seeking control,'' in \emph{2022 13th
  Asian Control Conference (ASCC)}.\hskip 1em plus 0.5em minus 0.4em\relax
  IEEE, 2022, pp. 2097--2102.

\bibitem{ref-3}
H.-B. D{\"u}rr, M.~S. Stankovi{\'c}, D.~V. Dimarogonas, C.~Ebenbauer, and K.~H.
  Johansson, ``Obstacle avoidance for an extremum seeking system using a
  navigation function,'' in \emph{2013 American Control Conference}.\hskip 1em
  plus 0.5em minus 0.4em\relax IEEE, 2013, pp. 4062--4067.

\bibitem{Glotfelter}
P.~Glotfelter, I.~Buckley, and M.~Egerstedt, ``Hybrid nonsmooth barrier
  functions with applications to provably safe and composable collision
  avoidance for robotic systems,'' \emph{IEEE Robotics and Automation Letters},
  vol.~4, no.~2, pp. 1303--1310, 2019.

\bibitem{Majd}
K.~Majd, S.~Yaghoubi, T.~Yamaguchi, B.~Hoxha, D.~Prokhorov, and G.~Fainekos,
  ``Safe navigation in human occupied environments using sampling and control
  barrier functions,'' in \emph{2021 IEEE/RSJ International Conference on
  Intelligent Robots and Systems (IROS)}.\hskip 1em plus 0.5em minus
  0.4em\relax IEEE, 2021, pp. 5794--5800.

\bibitem{Yaghoubi}
S.~Yaghoubi, K.~Majd, G.~Fainekos, T.~Yamaguchi, D.~Prokhorov, and B.~Hoxha,
  ``Risk-bounded control using stochastic barrier functions,'' \emph{IEEE
  Control Systems Letters}, vol.~5, no.~5, pp. 1831--1836, 2020.

\bibitem{Li}
T.~Li, B.~Jayawardhana, A.~M. Kamat, and A.~G.~P. Kottapalli, ``Source-seeking
  control of unicycle robots with 3-d-printed flexible piezoresistive
  sensors,'' \emph{IEEE Transactions on Robotics}, vol.~38, no.~1, pp.
  448--462, 2021.

\bibitem{Khalil}
H.~K. Khalil, \emph{{Nonlinear systems; 3rd ed.}}\hskip 1em plus 0.5em minus
  0.4em\relax Upper Saddle River, NJ: Prentice-Hall, 2002, the book can be
  consulted by contacting: PH-AID: Wallet, Lionel.

\bibitem{xiao2021high}
W.~Xiao, C.~A. Belta, and C.~G. Cassandras, ``High order control
  lyapunov-barrier functions for temporal logic specifications,'' in \emph{2021
  American Control Conference (ACC)}.\hskip 1em plus 0.5em minus 0.4em\relax
  IEEE, 2021, pp. 4886--4891.

\bibitem{Li_connectivity}
T.~\vspace{0pt}Li and B.~Jayawardhana, ``Collision-free source seeking and
  flocking control of multi-agents with connectivity preservation,''
  \emph{arXiv preprint arXiv:2301.04576}, 2023.

\bibitem{Li_CBF}
T.~Li and B.~Jayawardhana, ``Collision-free source seeking control methods for
  unicycle robots,'' \emph{arXiv preprint arXiv:2212.07203}, 2022.

\bibitem{Romdlony2016b}
M.~Z. Romdlony and B.~Jayawardhana, ``On the new notion of input-to-state
  safety,'' in \emph{2016 IEEE 55th conference on decision and control
  (CDC)}.\hskip 1em plus 0.5em minus 0.4em\relax IEEE, 2016, pp. 6403--6409.

\bibitem{Romdlony2019}
M.~Z. \vspace{0pt}Romdlony and B.~Jayawardhana, ``Robustness analysis of
  systems' safety through a new notion of input-to-state safety,''
  \emph{International Journal of Robust and Nonlinear Control}, vol.~29, no.~7,
  pp. 2125--2136, 2019.

\end{thebibliography}

\end{document}